\definecolor{darkred}{RGB}{150,0,0}
\newcommand{\cln}[1]{\textcolor{red}{}}
\newcommand{\beq}{\begin{equation}}
\newcommand{\ba}{\begin{align}}
\newcommand{\ea}{\end{align}}
\newcommand{\eeq}{\end{equation}}
\newcommand{\Lc}{{\cal{L}}}
\newcommand{\Dc}{{\cal{D}}}
\newcommand{\order}[1]{{\cal{O}}(#1)}
\newcommand{\Ac}{\mathcal{A}}
\newcommand{\Sc}{\mathcal{S}}
\newcommand{\w}{\vct{w}}
\newcommand{\Fc}{\mathcal{F}}
\newcommand{\Xc}{\mathcal{X}}
\newcommand{\x}{\vct{x}}
\newcommand{\W}{\mtx{W}}
\definecolor{emmanuel}{RGB}{255,127,0}
\newcommand{\R}{\mathbb{R}}
\newcommand{\Pro}{\mathbb{P}}
\newcommand{\vct}[1]{\bm{#1}}
\newcommand{\mtx}[1]{\bm{#1}}
\newcommand{\red}{\textcolor{darkred}}
\newcommand{\lb}{{\bm{l}}}
\newcommand{\Deltab}{{\bm{\Delta}}}
\newlength{\commentWidth}
\newcommand{\name}{\texttt{CAP}\xspace} 
\newcommand{\dictionary}{dictionary }
\newcommand{\Strat}{\textbf{A2H}}
\newcommand{\strat}{\Sc}
\newcommand{\att}{\Ac}
\newcommand{\Dic}{\bm{\Dc}}
\newcommand{\Func}{\Fc}
\newcommand{\err}{{\text{Err}}}
\newcommand{\berr}{{\err_{\text{bal}}}}
\newcommand{\werr}{{\err_{\text{weighted}}}}
\newcommand{\serr}{{\err}_{\text{plain}}}
\newcommand{\kerr}{{\err_{k}}}
\newcommand{\quan}{{\text{Quant}_{a}}}
\newcommand{\quana}[1]{{\text{Quant}_{#1}}}
\newcommand{\aggr}{{\mathcal{R}(\err)}}
\newcommand{\cv}{{\text{CVaR}_{a}}}
\newcommand{\cva}[1]{{\text{CVaR}_{#1}}}
\newcommand{\nodic}{{\text{Plain}}}
\newcommand{\pretrain}{{\text{Pretrained}}}
\newcommand{\dev}{{\text{Err}_{\text{SDev}}}}
\newcommand{\lgt}{\bm{o}}
\newcommand{\attfreq}{{\att_\textsc{freq}}}
\newcommand{\attdiff}{{\att_\textsc{diff}}}
\newcommand{\attw}{{\att_\textsc{weights}}}
\newcommand{\attnorm}{{\att_\textsc{norm}}}
\newtheorem{theorem}{Theorem}
\newtheorem{corollary}{Corollary}[theorem]
\title{Class-attribute Priors: Adapting Optimization to\\ Heterogeneity and Fairness Objective}
\author {
    Xuechen Zhang\textsuperscript{\rm 1},
    Mingchen Li\textsuperscript{\rm 2},
    Jiasi Chen\textsuperscript{\rm 2},
    Christos Thrampoulidis\textsuperscript{\rm 3},
    Samet Oymak\textsuperscript{\rm 2}
}
\begin{document}

\maketitle

\begin{abstract}

Modern classification problems exhibit heterogeneities across individual classes: Each class may have unique attributes, such as sample size, label quality, or predictability (easy vs difficult), and variable importance at test-time. Without care, these heterogeneities impede the learning process, most notably, when optimizing fairness objectives. Confirming this, under a gaussian mixture setting, we show that the optimal SVM classifier for balanced accuracy needs to be adaptive to the class attributes. This motivates us to propose \name: An effective and general method that generates a class-specific learning strategy (e.g.~hyperparameter) based on the attributes of that class. This way, optimization process better adapts to heterogeneities. \name leads to substantial improvements over the naive approach of assigning separate hyperparameters to each class. We instantiate \name for loss function design and post-hoc logit adjustment, with emphasis on label-imbalanced problems. We show that \name is competitive with prior art and its flexibility unlocks clear benefits for fairness objectives beyond balanced accuracy. Finally, we evaluate \name on problems with label noise as well as weighted test objectives to showcase how \name can jointly adapt to different heterogeneities.
\end{abstract}
\section{Introduction}
\begin{figure*}[t]
\vspace{-12pt}
\centering
\begin{tikzpicture}
   \node at (-10.4,0)[anchor=west] {\includegraphics[scale=0.24]{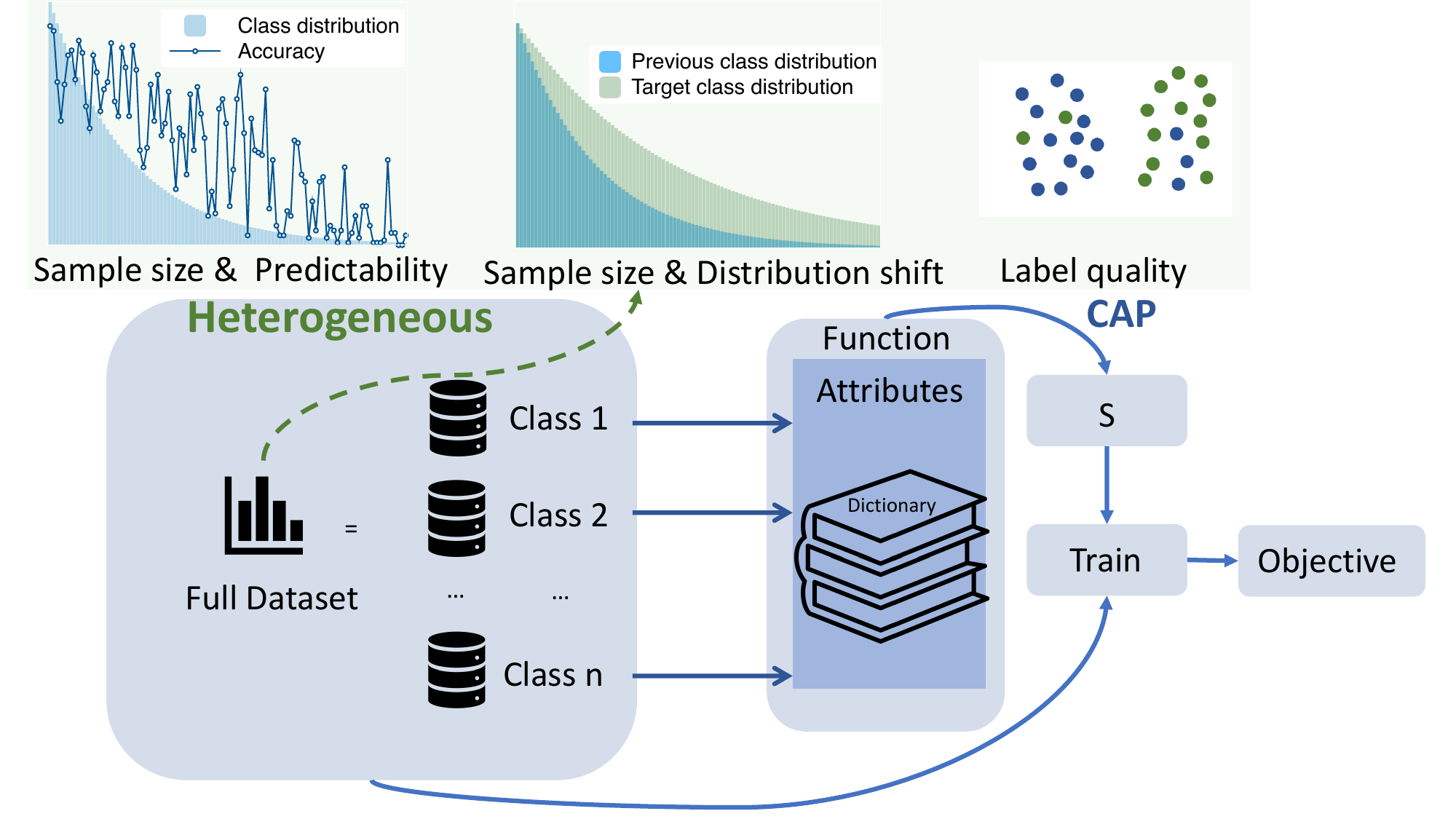}};
    
    \def \move {8.5};
    \node at (-10.7+\move,2)[anchor=west] [scale=0.8]{\red{Instantiations of \name:}};
    \node at (-10.4+\move,1.63)[anchor=west] [scale=0.7]{\textbf{1.~Bilevel optimization} of loss function};
    \node at (-10.4+\move,1.3)[anchor=west] [scale=0.7]{\textbf{2.~Post-hoc optimization} of logits};
    
    \node at (-10.7+\move,-0.7)[anchor=west] [scale=0.8]{\red{Distinct Heterogeneities (Sec \ref{sec:objective}):}};
    \node at (-10.4+\move,-1)[anchor=west] [scale=0.7]{Class frequency};
    \node at (-10.4+\move,-1.3)[anchor=west] [scale=0.7]{Prediction difficulty};
    \node at (-10.4+\move,-1.6)[anchor=west] [scale=0.7]{Variable class importance};
    \node at (-10.4+\move,-1.9)[anchor=west] [scale=0.7]{Label noise level}; 

    \node at (-10.7+\move,0.9)[anchor=west] [scale=0.8]{\red{Distinct Fairness Objectives (Sec \ref{sec:attribute}):}};
    \node at (-10.4+\move,0.6)[anchor=west] [scale=0.7]{Balanced error};
    \node at (-10.4+\move,0.3)[anchor=west] [scale=0.7]{Quantiles or Conditional value at Risk};
    \node at (-10.4+\move,0.0)[anchor=west] [scale=0.7]{Standard deviation of class-conditional errors};
    \node at (-10.4+\move,-0.3)[anchor=west] [scale=0.7]{Weighted error};

\end{tikzpicture}
\vspace{-10pt}
\caption{\small{\textbf{Left hand side:} \name views the global dataset as a composition of heterogeneous sub-datasets induced by classes. We extract high-level attributes from these classes and use these attributes to generate class-specific optimization strategies (which correspond to hyperparameters). Our proposal is efficiently generating these hyperparameters based on class-attributes through a meta-strategy. \textbf{Right hand side:} We demonstrate that \name leads to state-of-the-art strategies for loss function design and post-hoc optimization. \name can leverage multiple attributes to flexibly optimize a variety of test objectives under heterogeneities.} }
\end{figure*}\label{fig:framework22}


Contemporary machine learning problems arising in natural language processing and computer vision often involve large number of classes to predict. Collecting high-quality training datasets for all of these classes is not always possible, and realistic datasets~\cite{menon2020long,feldman2020does,hardt2016equality} suffer from class-imbalances, missing or noisy labels (among other application-specific considerations). Optimizing desired accuracy objectives with such heterogeneities poses a significant challenge and motivates the contemporary research on imbalanced classification, fairness, and weak-supervision. Additionally, besides distributional heterogeneities, we might have objective heterogeneity. For instance, the target test accuracy may be a particular weighted combination of individual classes, where important classes are upweighted.



A plausible approach to address these distributional and objective heterogeneities is designing optimization strategies that are tailored to individual classes. A classical example is assigning individual weights to classes during optimization. The recent proposals on imbalanced classification \cite{li2021autobalance,chawla2002smote} can be viewed as generalization of weighting and can be interpreted as developing unique loss functions for individual classes. More generally, one can use class-specific data augmentation schemes, regularization or even optimizers (e.g.~Adam, SGD, etc) to improve target test objective. While promising, this approach suffers when there are a large number of classes $K$: naively learning class-specific strategies would require $\order{K}$ hyperparameters ($\order{1}$ strategy hyperparameter per class). This not only creates computational bottlenecks but also raises concerns of overfitting for tail classes with small sample size.



To overcome such bottlenecks, we introduce the  \textbf{Class-attribute Priors (\name)} approach. Rather than treating hyperparameters as free variables, \name is a meta-approach that treats them as a function of the class attributes. As we discuss later, example attributes $\att$ of a class include its frequency, label-noise level, training difficulty, similarity to other classes, test-time importance, and more. Our primary goal with \name is building an \textbf{attribute-to-hyperparameter} function $\Strat$ that generates class-specific hyperparameters based on the attributes associated with that class. This process infuses high-level information about the dataset to accelerate the design of class-specific strategies. The $\Strat$ maps the attributes $\att$ to a class-specific strategy $\strat$. The primary advantage is robustness and sample efficiency of $\Strat$, as it requires $\order{1}$ hyperparameters to generate $\order{K}$ strategies. The main contribution of this work is proposing \name framework and instantiating it for loss function design and post-hoc optimization which reveals its empirical benefits. Specifically, we make the following contributions:
\begin{enumerate}
\item \textbf{Introducing Class-attribute Priors (Sec \ref{sec:cap}).} We first provide theoretical evidence on the benefits of using multiple attributes (see Fig \ref{fig:gmmm}). This motivates \name: A meta approach that utilizes the high-level attributes of individual classes to personalize the optimization process. Importantly, \name is particularly favorable to tail classes which contain too few examples to optimize individually.
\item \textbf{\name improves existing approaches (Sec \ref{sec:Exp}).} By integrating \name within existing label-imbalanced training methods, \name not only improves their performance but also increases their stability, notably, AutoBalance \cite{li2021autobalance} and logit-adjustment loss \cite{menon2020long}. 
\item \textbf{\name adapts to fairness objective (Sec \ref{sec:objective}).} \name's flexibility is particularly powerful for non-standard settings that prior works do not account for: \name achieves significant improvement when optimizing fairness objectives other than balanced accuracy, such as standard deviation, quantile errors, or Conditional Value at Risk (CVaR).
\item \textbf{\name adapts to class heterogeneities (Sec \ref{sec:attribute}).} \name can also effortlessly combine multiple attributes (such as frequency, noise, class importance) to boost accuracy by adapting to problem heterogeneity. 
\end{enumerate}



Finally, while we instantiate \name for the problems of loss-function design and post-hoc optimization, \name can be applied for the design of class-specific augmentations, regularization schemes, and optimizers. This work makes key contributions to fairness and heterogeneous learning problems in terms of methodology, as well as practical impact. An overview of our approach is shown in Fig.~\ref{fig:framework22}

\subsection{Related Work}
The existing literature establishes a series of algorithms, including sample weighting~\cite{alshammari2022long,maldonado2022fw,kubat1997addressing,wallace2011class,chawla2002smote}, post-hoc tuning~\cite{menon2020long,zhang2019balance,kim2020adjusting,kang2019decoupling,ye2020identifying}, and loss functions tuning~\cite{cao2019learning,kini2021label,menon2020long,khan2017cost,cui2019class,li2022targeted,tan2020equalization,zhang2017range}, and more \cite{zhang2023deep}. This work aims to establish a principled approach for designing a loss function for imbalanced datasets. Traditionally, a Bayes-consistent loss function such as weighted cross-entropy ~\cite{xie2018snas,morik1999combining} has been used. However, recent work shows it only adds marginal benefit to the over-parameterized model due to overfitting during training. \cite{menon2020long,ye2020identifying,kini2021label} propose a family of loss functions formulated as $\ell(y, f(\x))=\log \left(1+\sum_{k \neq y} e^{\lb_{k}-\lb_{y}} \cdot e^{\Deltab_{k} f_{k}(\x)-\Deltab_{y} f_{y}(\x)}\right)$ with theoretical insights, where $f(\x)$ denotes the output logits of $\x$ and $f_{y}(\x)$ represents the entry that corresponds to label $y$. Above methods determine the value of $\lb$ and $\Deltab$ to re-weight the loss function so the optimization generates a class-balanced model. In addition to these methods, \cite{li2021autobalance} proposes a bilevel training scheme that directly optimizes $\lb$ and $\Deltab$ on a sufficient small imbalanced validation data without the prior theoretical insights. However, the theory-based methods require expertise and trial and error to tune one temperature variable, making it time-consuming and challenging to achieve a fine-grained loss function that carefully handles each class individually. Although the bilevel-based method consider each class separately and personalizes the weight using validation data, optimizing the bilevel problem is typically time-consuming due to the Hessian computations. Bilevel optimization is also brittle, especially when \cite{li2021autobalance} optimizes the inner loss function, which continually changes the inner optima during the training.\\
Regarding the broader goal of fairness with respect to protected groups, the literature contains several proposals  \cite{sahu2018convergence,kleinberg2016inherent}. Balanced error and standard deviation \cite{calmon2017optimized,alabi2018unleashing} between subgroup predictions are widely used metrics. However, they may be insensitive to certain types of imbalances. The Difference of Equal Opportunity (DEO) \cite{kini2021label,hardt2016equality} was proposed to measure true positive rates across groups. \cite{zafar2017fairness} focus on disparate mistreatment in both false positive rate and false negative rate. 
Many modern ML tasks necessitate models with good tail performance, focusing on underrepresented groups. Recent works have introduced techniques that promote better tail accuracy \cite{hashimoto2018fairness,sagawa2019distributionally,sagawa2020investigation,li2021autobalance,kini2020analytic}. The worst-case subgroup error is commonly used in recent papers \cite{kini2021label,sagawa2019distributionally,sagawa2020investigation}. Another popular metric to evaluate the model’s tail performance is the CVaR (Conditional Value at Risk) \cite{williamson2019fairness,zhai2021boosted,michel2021modeling}, which computes the average error over the tails.  Previous works \cite{hashimoto2018fairness,duchi2021learning,hu2018does,michel2021modeling,lahoti2020fairness} also measure tail behaviour using Distributionally Robust Optimization (DRO).

\vspace{-4pt}
\section{Problem Setup}

This paper investigates the advantages of utilizing attribute-based personalized training approaches for addressing heterogeneous classes in the context of class imbalance, label noise, and fairness objective problems. We begin by presenting the general framework, followed by an examination of specific fairness issues, which encompass both distributional and objective heterogeneities.
Consider a multi-class classification problem for a dataset $ (\x_i,y_i)^{N}_{i=1}$ sampled i.i.d from a distribution with input space $\Xc$ and $K$ classes. Let $[K]$ denote the set $\{1..K\}$ and for the training sample $(\x,y)$, $\x \in \Xc$ is the input and $y\in [K]$ is the output. $f: \Xc\rightarrow \mathbb{R}^K$ represents the model and $\lgt$ is the output logits.  $\hat{y}_{f(\x)}=\text{arg max}_{k\in [K]} \lgt_k$ is the predicted label of the model $f(\x)$. We also denote $K\times K$ identity matrix by $\bm{I}_K$. Moreover, in the post-hoc setup, a logit adjustment function $g: \mathbb{R}^K\rightarrow \mathbb{R}^K$ is employed to modify the logits, resulting in adjusted logits $\hat{\lgt}= g(\lgt)$. 

Our goal is to train a model that minimizes a specific classification error metric. The class-conditional errors are defined over the data distribution as 
$
\err_{k}=\mathbb{P}\left[y \neq \hat{y}_{f}(\boldsymbol{x})\mid y=k\right]
$. The standard classification error is denoted by $\serr=\mathbb{P}\left[y \neq \hat{y}_{f}(\boldsymbol{x})\right]$. In situations with label imbalance, $\serr$ is dominated by the majority classes. To this end, balanced classification error $\berr=\frac{1}{K} \sum_{k=1}^{K}\kerr$ is widely employed as a fairness metric. We will later introduce other objectives that aim to achieve different fairness goals. A complete list of the objectives we examine can be found in Appendix. 
\vspace{-4pt}
\section{Our Approach: Class-attribute Priors (\name)}\label{sec:cap}
		\begin{table*}
	\begin{center}\resizebox{0.8\textwidth}{!}{
\begin{tabular}{llll}
\toprule
 Attributes            & Definition &Notation& Application scenario \\
\midrule
\red{ $\att_{\textsc{freq}}$}         & Class frequency & $\pi_k=\Pro(y=k)$& Imbalanced classes\\
 \red{$\att_{\textsc{diff}}$ }        & Class-conditional error & $\Pro(y\neq \hat{y})$& Difficult vs easy classes\\
 \red{$\att_{\textsc{weights}}$}      & Test-time class weights & $\omega^{\text{test}}_k$ of \eqref{weighted obj}& Weighted test accuracy\\
 \midrule
 $\att_{\textsc{noise}}$ & Label noise ratio & $\Pro(y^{\textsc{clean}}\neq y\big|y^{\text{clean}}=k)$ & Datasets with label noise\\
 $\att_{\textsc{norm}}$ &  Norm of classifier weights & See \cite{cao2019learning} & Imbalanced classes\\
\bottomrule
		\end{tabular}}\vspace{-3pt}\caption{Definition of example attributes and associated application scenarios. Attributes $\attdiff$ and $\attnorm$ are computed during the training (for post-hoc optimization, it is pre-training). For bilevel training they are computed at the end of warm-up. The upper attributes in \red{red} color are those we utilize in our experiments. Also we use \red{$\Ac_{\textsc{all}}$} to denote combined attributes. }\label{table:attributes_definition}
	\end{center}
		\end{table*}

We start with a motivating question:

\emph{\textbf{Q:} Does utilizing multiple class attributes provably help?}

To answer this, we consider the benefits of multiple attributes for a binary gaussian mixture model (GMM) and provide a simple theoretical justification why synergistically leveraging attributes can help balanced accuracy. Consider a GMM where data from the two classes are generated as
\begin{align*}
y = \begin{cases} +1 &,\text{with prob.}~\pi
\\
-1 &,\text{with prob.}~1-\pi
\end{cases}
~~~\text{and}~~~\mathbf{x}|y\sim \mathcal{N}(y\boldsymbol{\mu},\sigma_y\mathbf{I}_d).
\end{align*}
Note here that the two classes are \textbf{imbalanced} as a function of the value of $\pi\in(0,1)$, which models class frequency. Also, the two classes are allowed to have different noise variances $\sigma_{\pm1}$. This is our model for the \textbf{difficulty} attribute: examples generated from the class with larger variance are ``more difficult" to classify as they fall further apart from their mean. Intuitively, a ``good" classifier should account for both attributes. We show here that this is indeed the case for the model above. Our setting is as follows: Draw $n$ IID samples $(\mathbf{x}_i,y_i)$ from the GMM distribution above. Without loss of generality, assume class $y=+1$ is minority, i.e. $\pi<1/2$. We train linear classifier $(\mathbf{w},b)$ by solving the following cost-sensitive support-vector-machines (CS-SVM) problem:
\begin{align*}
(\hat{\mathbf{w}}_\delta,\hat{b}_\delta):=\arg\min_{\mathbf{w},b} \|\mathbf{w}\|_2~\text{s.t.}~y_i(\mathbf{x}_i^T\mathbf{w}+b)\geq\begin{cases}
    \delta & y_i=+1
    \\
    1 & y_i=-1
    \end{cases}.
\end{align*}
Here, $\delta$ is a hyperparameter that when taking values larger than one, it pushes the classifier towards the majority, thus assigning larger margin to the minorities. In particular, setting $\delta=1$ recovers the vanilla SVM. CS-SVM is particularly relevant to our setting because it has rigorous connections to the generalized cross-entropy loss in Sec \ref{loss func} (see Appendix F for details). 
Given CS-SVM solution $(\hat{\mathbf{w}}_\delta,\hat{b}_\delta)$, we measure the balanced error as follows:
$$
\mathcal{R}_{\text{bal}}(\delta):= \mathbb{P}_{(\mathbf{x},y)\sim\text{GMM}}\left\{y(\mathbf{x}^T\hat{\mathbf{w}}_\delta+\hat{b}_\delta)>0\right\}.
$$
We ask: How does the optimal CS-SVM classifier (i.e, the optimal hyperparameter $\delta$) depend on the data attributes, i.e. on the frequency $\pi$ and on the difficulty $\sigma_{+1}/\sigma_{-1}$? To answer this we consider a high-dimensional asymptotic setting in which $n,d\rightarrow\infty$ at a linear rate $d/n=:\bar{d}$. This regime is convenient as previous work has shown that the limiting behavior of the balanced error $\mathcal{R}_{\text{bal}}(\delta)$ can be captured precisely by analytic formulas \cite{montanari2019generalization}. Specifically, \cite{kini2021label} computes formulas for the optimal hyperparameter $\delta$ when $\pi$ is variable but both classes are equally difficult, i.e. $\sigma_{+1}=\sigma_{-1}$. Here, we derive risk curves for arbitrary $\sigma_{\pm 1}$ by extending their study and investigate the synergistic effect of frequency and difficulty.

Figure \ref{fig:gmmm} confirms our intuition: \textbf{the optimal hyperparameter $\delta_*$ (y-axis) does indeed depend on the frequency and difficulty} (as well as the training sample size, x-axis). Specifically, we observe in both Figures \ref{fig:gmmm}(a,b) that as the minority class becomes easier (aka, the smaller ratio $\sigma_{+1}/\sigma_{-1}$), $\delta$ decreases. That is, there is less need to assign an even larger margin for the minority. Conversely, as $\sigma_{+1}/\sigma_{-1}$ increases and minority becomes more difficult, its margin gets a boost. Finally, comparing Figures \ref{fig:gmmm}(a) to \ref{fig:gmmm}(b), note that $\delta_*$ takes larger values for larger imbalance ratio (i.e., smaller frequency $\pi$), again aggreeing with commonsense. 

Since GMM data is synthetic, $\delta_*$ can be computed analytically. Our approach \name will facilitate realizing such benefits systematically and efficiently for arbitrary class attributes.


\begin{figure}
\begin{subfigure}{1.54in}
\includegraphics[scale=0.218]{./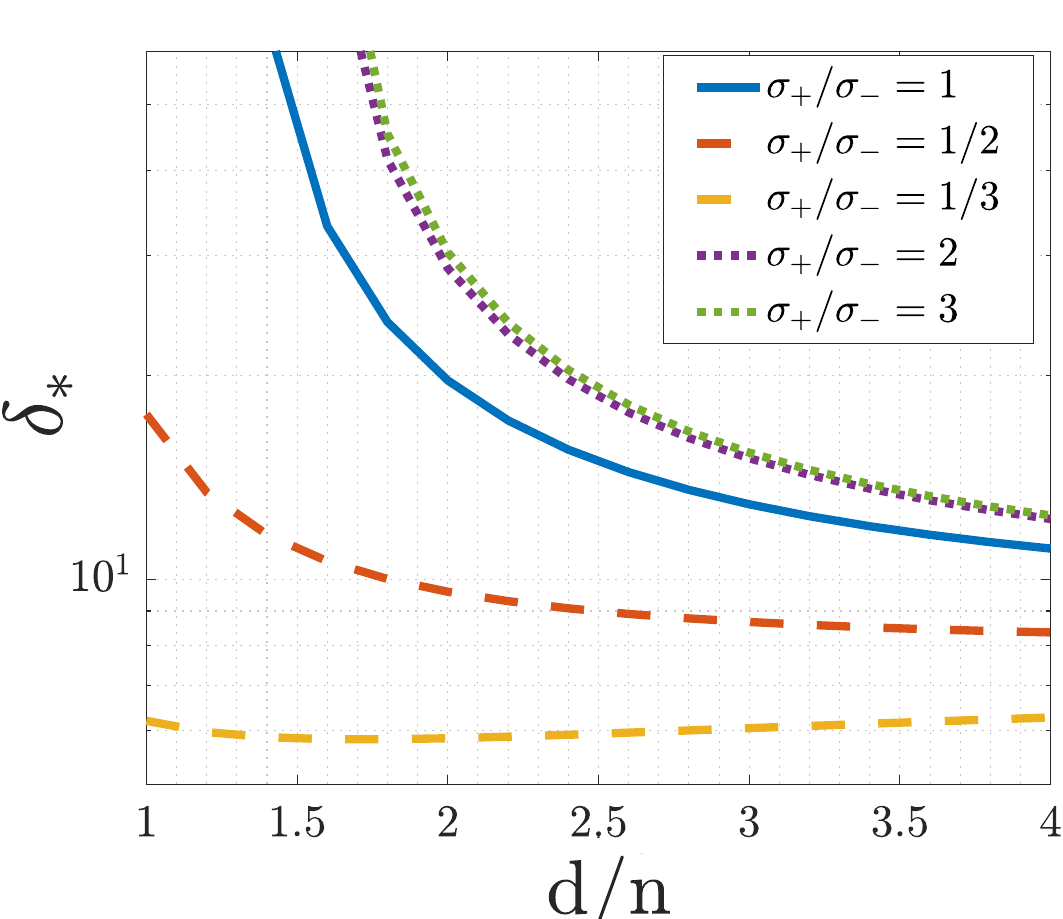}
 \caption{Imbalance $\pi=0.1$}
\end{subfigure}~
\begin{subfigure}{1.54in}
\includegraphics[scale=0.218]{./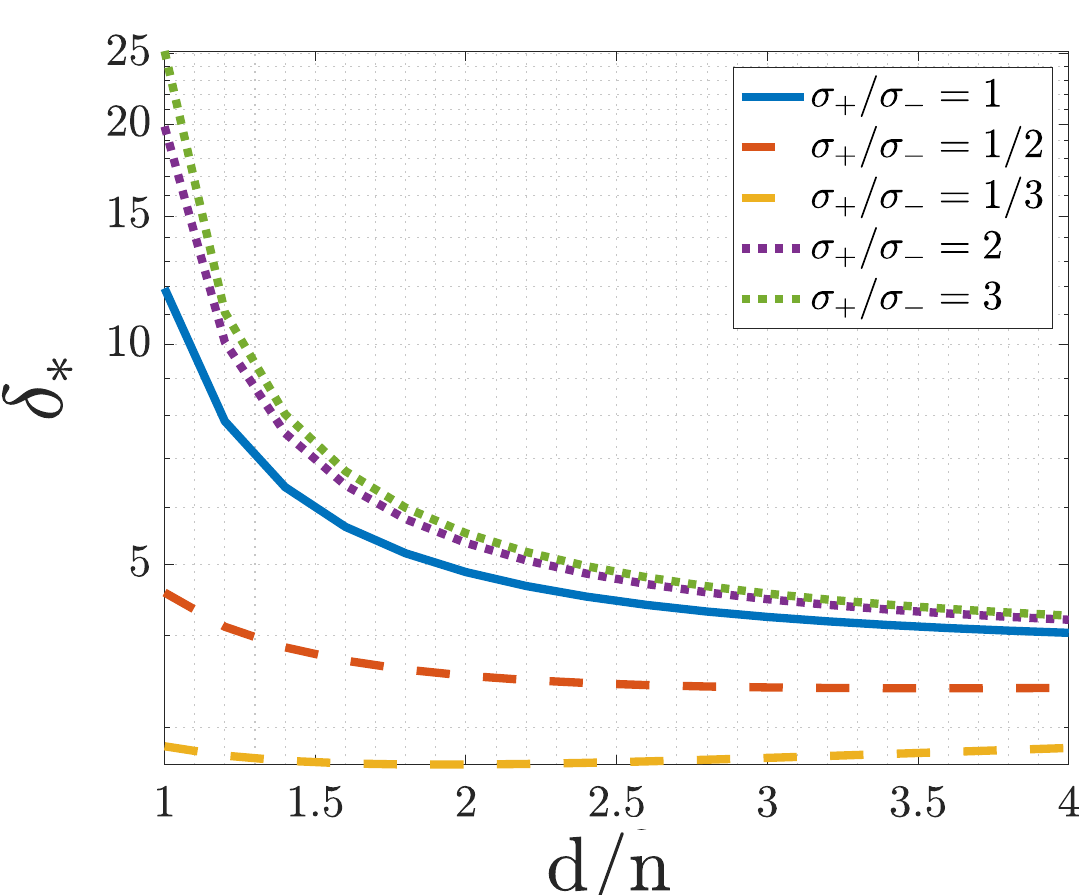}
 \caption{Imbalance $\pi=0.2$}
\end{subfigure}
\caption{The optimal hyperparameter $\delta_*$ depends on both attributes: frequency ($\pi$) and difficulty ($\sigma_+/\sigma_-$).}
\label{fig:gmmm}
\end{figure}

\subsection{Attributes and Adaptation to Heterogeneity}\label{sec:attributes}
{To proceed, we introduce our \name approach at a conceptual level and  provide concrete applications of \name to loss function design in the next section.} Recall that our high-level goal is designing a map from $\Strat$ that takes attributes $\att_k$ of class $k$ and generates the hyperparameters of the optimization strategy $\strat_k$. Each coordinate $\att_k[i]$ characterizes a specific attribute of class $k$ such as label frequency, label noise ratio, training difficulty shown in Table~\ref{table:attributes_definition}. To model $\Strat$, one can use any hypothesis space including deep nets. However, since $\Strat$ will be optimized over the validation loss, depending on the application scenario, it is often preferable to use a simpler linearized  model.

\noindent\textbf{Linearized approach.} Suppose each class has $n$ attributes with $\att_k\in\R^n$. We will use a nonlinear feature map $\bm{\Func}(\cdot):\R^n\rightarrow \R^M$ where $M$ is the embedding space. Suppose the class-specific strategy $\strat_k\in\R^s$. Then, $\Strat$ can be parameterized by a weight matrix $\W\in\R^{s\times M}$ so that
\begin{align}
\vspace{-15pt}
\strat_k = \Strat(\att_k):=\W\bm{\Func}(\att_k).\label{Wmap}
\vspace{-15pt}
\end{align}
Our goal becomes finding $\W$ so that the resulting strategies maximize the target validation objective. Observe that $\W$ has $s\times M$  parameters rather than $s\times K$ parameters which is the naive approach that learns individual strategies. In practice, $K$ can be significantly large, so for typical problems, $M\ll K$. Moreover, $\W$ ties all classes together during training through weight-sharing whereas the naive approach would be brittle for tail classes that contain very limited data. 

\subsection{\name for Loss Function Design}\label{loss func}

Consider the generalized cross-entropy loss \[
\ell(y, f(\x))=\omega_y\log (1+\sum_{k \neq y} e^{\lb_{k}-\lb_{y}} \cdot e^{\Deltab_{k} f_{k}(\x)-\Deltab_{y} f_{y}(\x)}).
\]
Here, $(\omega_k,\lb_k,\Deltab_k)_{k=1}^K$ are hyperparameters that can be tuned to optimize the desired test objective. For class $k$, we get to choose the tuple $\strat_k:=[\omega_k,\lb_k,\Deltab_k]$ which can be considered as its training strategy. Here elements of $\strat_k$ arise from existing imbalance-aware strategies, namely weighting $\omega_k$, additive logit-adjustment $\lb_k$ and multiplicative adjustment $\Deltab_k$.

\noindent\textbf{Example: LA and CDT losses viewed as \name.} For label imbalanced problems, \cite{menon2020long,ye2020identifying} propose to set hyperparameters $\lb_k$ and $\Deltab_k$ as a function of frequency $\pi_k=\Pro(y=k)$. Concretely, they propose $\lb_k=-\gamma\log(\pi_k)$ \cite{menon2020long} and $\Deltab_k=\pi_k^\gamma$ \cite{ye2020identifying} for some scalar $\gamma$. These can be viewed as special instances of \name where we have a single attribute $\Ac_k=\pi_k$ and $\Strat(x)$ is $-\gamma\log(x)$ or $x^\gamma$ respectively.


\noindent\textbf{Our approach} can be viewed as an extension of these to attributes beyond frequency and general class of $\Strat$. In light of \eqref{Wmap}, hyperparameters of a specific element of $\strat_k=[\omega_k,\lb_k,\Deltab_k]$ correspond to a particular row of $\W\in\R^{3\times M}$ since $\W=[\w_{\bm{\omega}},\w_\lb,\w_\Deltab]^\top$. Our goal is then tuning the $\W$ matrix over validation data. In practical implementation, we define a feature dictionary 
\begin{align}
\vspace{-20pt}
\Dic = \begin{bmatrix}\bm{\Func}(\att_1)~\cdots~\bm{\Func}(\att_K)\end{bmatrix}^\top \in \mathbb{R}^{K\times M}.\label{dicdic}
\vspace{-20pt}
\end{align}
Each row of this dictionary is the features associated to the attributes of class $k$. We generate the strategy vectors $\Deltab, \lb,\bm{\omega}\in\R^K$ (for all classes) via $\bm{\omega}=\Dic\w_{\bm{\omega}}$, $\Deltab=\text{sigmoid}(\sqrt{K}\frac{\Dic\w_\Deltab}{\|\Dic\w_\Deltab\|})$, $\lb=\Dic\w_\lb$.

For both loss function design and post-hoc optimization, we use a decomposable feature map $\bm{\Func}$. Concretely, suppose we have basis functions $(\Func_i)_{i=1}^m$. These functions are chosen to be poly-logarithms or polynomials inspired by \cite{menon2020long,ye2020identifying}. For $i$th attribute $\att_k[i]\in\R$, we generate $\bm{\Func}(\att_k[i])\in\R^m$ obtained by applying $(\Func_i)_{i=1}^m$. We then stitch them together to obtain the overall feature vector $\bm{\Func}(\att_k)=[\bm{\Func}(\att_k[1])^\top~\cdots~\bm{\Func}(\att_k[m])^\top]\in\R^{M:=m\times n}$. We emphasize that prior approaches are special instances where we choose a single basis function and single attribute $\pi_k$.

\noindent\textbf{Which attributes to use and why multiple attributes help?} Attributes should be chosen to reflect the heterogeneity across individual classes. These include class frequency, difficulty of prediction, noise level and more. We list such potential attributes $\att$ in Table~\ref{table:attributes_definition}. The frequency $\attfreq$ is widely used to mitigate label imbalance, and $\attnorm$ is inspired by the imbalanced learning literature~\cite{cao2019learning}. However, these may not fully capture the heterogenous nature of the problem. As discussed above for GMMs (Fig \ref{fig:gmmm}), some classes can be more difficult to learn and require more upweighting despite containing sufficient training examples. This motivates the use of $\attdiff$. Moreover, rather than balanced accuracy, we may wish to optimize general test objectives including weighted accuracy with varying class importance. We can declare these test-time weights as an attribute $\attw$. Appendix provides theoretical justification for incorporating $\attw$ by showing \name can accomplish Bayes optimal logit adjustment for weighted error. More broadly, any class-specific meta-feature can be used as an attribute within \name. 



\noindent\textbf{Reduced search space and increased stability.} Searching $\lb$ and $\Deltab$ on $\mathbb{R}^{K}$ with very few validation samples raises the problem of unstable optimization. \cite{li2021autobalance} indicates the bilevel optimization is brittle and hard to optimize. They introduce a long warm-up phase and aggregate classes with similar frequency into $g$ groups, reducing the search space to $k/g$ dimensions. However, to achieve a fine-grained loss function, $g$ cannot be very large, so the search space remains large. In our method, with a good design of $\Dc$ (normally $n\approx 2$ and $m\approx 3$), we can utilize a constant $2mn\ll K$ that efficiently reduces the search space and provides better convergence and stability.

We remark that \dictionary is a general and efficient design that can recover multiple existing successful imbalanced loss function design algorithms. For example, \cite{menon2020long} and \cite{ye2020identifying} both utilize the frequency as $\att$ and apply logarithm and polynomial functions as $\Func$ on frequency to determine $\lb$ and $\Deltab$ respectively. Moreover, let $\att=\bm{I}_k$ and $\Func$ be an identity function, then training $\w_\lb, \w_\Deltab$ is equivalent to train $\lb,\Deltab$ which recovers the algorithm of \cite{li2021autobalance}. Despite the ability to generalize, the \dictionary is more flexible and powerful since the attributes can be chosen based on the scenarios. For example, naturally, class frequency is a critical criterion in an imbalanced dataset, but classification error in early training can also be a good criterion for evaluating class training difficulty. Furthermore, some specific attributes can be introduced to noisy or partial-labeled datasets to help design a better loss function. Our empirical study elucidates the benefit of combining multiple attributes and the \dictionary performance on the noisy imbalanced dataset.

\subsection{Class-specific Learning Strategies: Bilevel Optimization and Post-hoc optimization}\label{sec:imp_bilevel_posthoc}

\begin{table*}[t!]
\centering
\vspace{-20pt}
\resizebox{0.8\textwidth}{!}{
			\begin{tabular}{lccc}
			\toprule
	Method		& CIFAR10-LT & CIFAR100-LT&ImageNet-LT\\
				\toprule
				 Cross entropy       & 30.45($\pm$0.49)           &  61.94($\pm$0.28) &55.59($\pm$0.26)\\ 
				             Logit adjustment (LA)\cite{menon2020long}       & 21.29$\dagger$($\pm$0.43)            &  58.21$\dagger$($\pm$0.31) &52.46$\sharp$\\ 
			                 CDT\cite{ye2020identifying}      & 21.57$\dagger$($\pm$0.50)      & 58.38$\dagger$($\pm$0.33)  & 53.47$\sharp$\\                                                                                                                                         
			$\nodic_{\text{Bilevel}}$ 
			(AutoBalance\cite{li2021autobalance})   & 21.15$\sharp$ &56.70$\sharp$&50.91$\sharp$ \\ \cline{1-4}
			
			 $\name_{\text{Bilevel}}: \att_{\textsc{all}}$    & \textbf{20.22}($\pm$0.35) &\textbf{56.38}($\pm$0.19)&\textbf{49.31}($\pm$0.34)\\  
   $\name_{\text{Post-hoc}}: \att_{\textsc{all}}$ & 20.87($\pm$0.38)        &57.63($\pm$0.26) &51.46($\pm$0.20)\\
				\bottomrule 
		\end{tabular}}\vspace{-5pt}\caption{Balanced error on long-tailed data using loss function designed via bilevel optimization. $\sharp$: best reported results taken from \cite{li2021autobalance}.$\dagger$: Reproduced results. }\label{table:dic}
\end{table*}
To instantiate $\name$ as a meta-strategy, we focus on two important class-specific optimization problems: loss function design via bilevel optimization and post-hoc logit adjustment. We describe them in this section and demonstrate that both methods outperform the state-of-the-art approaches. The figure in Appendix illustrates how CAP is implemented under bi-level optimization and post-hoc optimization in detail.

\noindent$\bullet$ \textbf{Strategy 1: Loss function design via bilevel optimization.} Inspired by \cite{li2021autobalance} and following our exposition in Section \ref{sec:attributes}, we formalize the meta-strategy optimization problem as \[\min_{\w_\lb,\w_\Deltab}\Lc_{\text{val}}(\w_\lb,\w_\Deltab,f) \quad \mathrm{ s.t. }\quad \min_{f}\Lc_{\text{train}}(\w_\lb,\w_\Deltab,f) \]
where $f$ is the model and $\Lc_{\text{val}}$, $\Lc_{\text{train}}$ are validation and training losses respectively. Our goal is finding \name parameters $\w_\lb,\w_\Deltab$ that minimize the validation loss which is the target fairness objective.  Following the implementation of \cite{li2021autobalance}, we split the training data to 80\% training and 20\% validation to optimize $\Lc_{\text{train}}$ and $\Lc_{\text{val}}$. The optimization process is split to two phases: the search phase that finds \name parameters $\w_{\lb},\w_{\Deltab}$ and the retraining phase that uses the outcome of search and entire training data to retrain the model. We note that, during initial search phase, \cite{li2021autobalance} employs a long \textit{warm-up} phase where they only train $f$ while fixing $\w_\lb,\w_\Deltab$ to achieve better stability. In contrast, we find that \name either needs very short warm-up or no warm-up at all pointing to its inherent stability.

\noindent$\bullet$ \textbf{Strategy 2: Post-hoc optimization.} In \cite{menon2020long,feldman2015certifying,hardt2016equality}, the author displays that the post-hoc logit adjustment can efficiently address the bias when training with imbalanced datasets. Formally, given a model $f$, a post-hoc function $g: \mathbb{R}^K\rightarrow \mathbb{R}^K$ adjusts the output of $f$ to minimize the fairness objective. Thus the final model of post-hoc optimization is $g\circ f(\x)$. 



\section{Experiments and Main Results}\label{sec:Exp}

\begin{figure*}
\centering
\begin{subfigure}{1.7in}
    \centering
	\begin{tikzpicture}
		\node at (0,0) [scale=0.29]{\includegraphics{./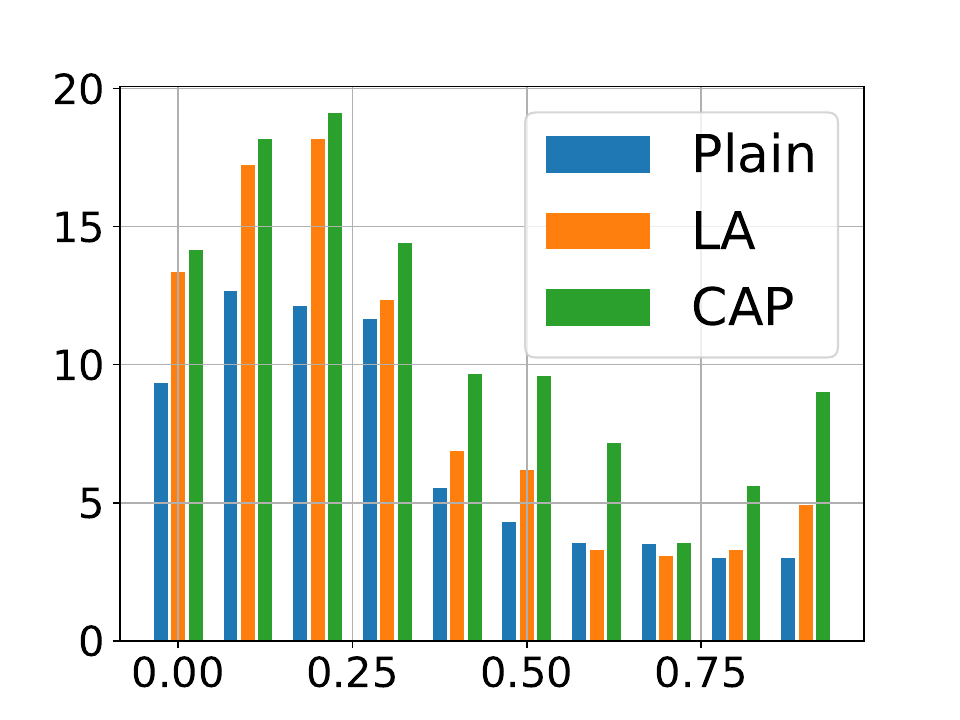}};
		\node at (-2.3 ,0) [scale=0.8,rotate=90] {$\quan$ improvement};
		\node at (0,-1.9) [scale=0.8] {$a$};
	\end{tikzpicture}\vspace{-4pt}\caption{Errors of quantile classes}\label{fig:obj_quan}
\end{subfigure}
\begin{subfigure}{1.8in}
    \centering
	\begin{tikzpicture}
		\node at (0,0) [scale=0.29]{\includegraphics{./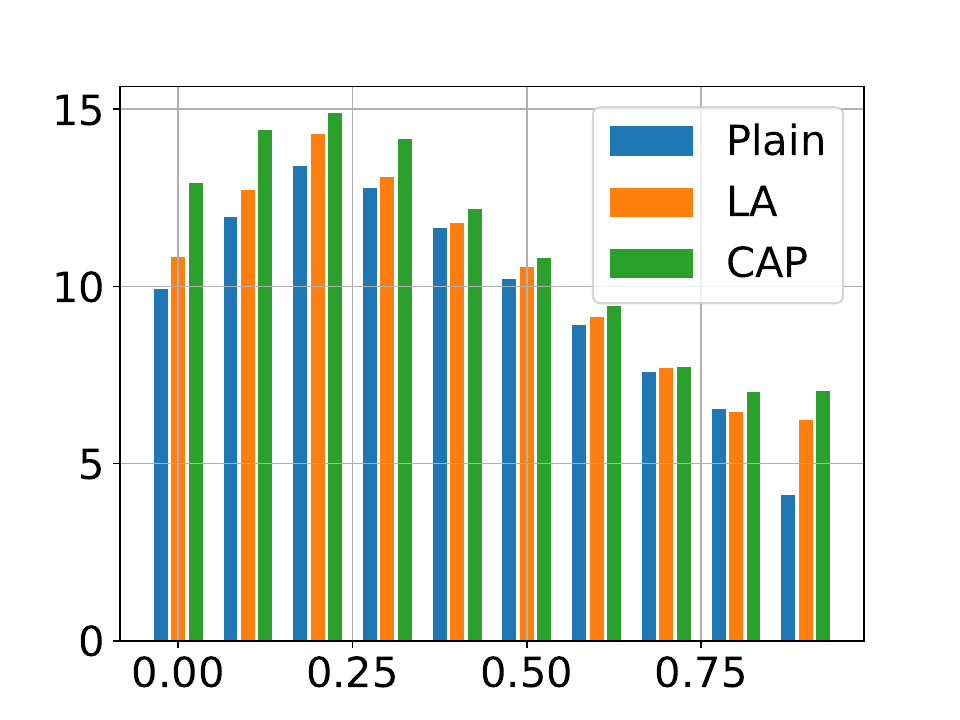}};
		\node at (-2.5,0) [scale=0.8,rotate=90] {$\cv$ improvement};
		\node at (0,-1.9) [scale=0.8] {$a$};
	\end{tikzpicture}\vspace{-4pt}\caption{Conditional value of errors}\label{fig:obj_cv}
\end{subfigure}
\begin{subfigure}{1.7in}
    \centering
	\begin{tikzpicture}
		\node at (0,0) [scale=0.29]{\includegraphics{./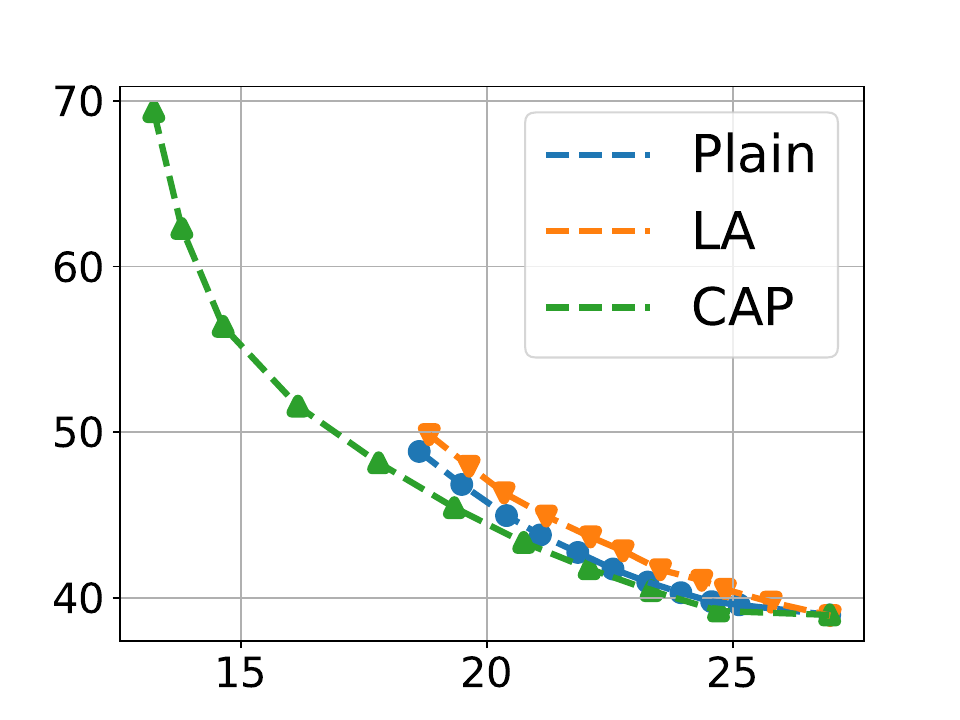}};
		\node at (-2.3,0) [scale=0.8,rotate=90] {Standard error ($\serr$)};
		\node at (0,-1.8) [scale=0.8] {Standard deviation ($\dev$)};
	\end{tikzpicture}\vspace{-4pt}\caption{Aggregation of errors}\label{fig:obj_agg}
\end{subfigure}
\caption{ Benefit of $\name$ for optimizing different Fairness Objectives. We compare among plain post-hoc, LA post-hoc and $\name_{\text{post-hoc}}$. (a):  Results of optimizing quantile class performance $\quan = \mathbb{P}\left[y \neq \hat{y}_{f}(\boldsymbol{x})\mid y=\mathrm{K}_{a}\right]$, where $\mathrm{K}_{a}$ denotes the class index with the worst $\lceil\mathrm{K}\times{a}\rceil$-th error. (b): Results of optimizing tail performance $\cv$. (c): Results of optimizing $\aggr= \lambda\cdot\serr+(1-\lambda)\cdot\dev$. The plot shows the trade-off between standard deviation of class-conditional errors $\dev$ and Standard misclassification error $\serr$ as $\lambda$ varies. See Sec.\ref{sec:objective} for detailed definition and discussions.}
\label{fig:obj}
\end{figure*}
\begin{table*}
\begin{center}\resizebox{0.8\textwidth}{!}{
\begin{tabular}{llllll}
\toprule
\multicolumn{1}{l}{Post-hoc methods}                  & $\berr$              & $\dev$ & $\cva{0.2}$ &$\quana{0.2}$ &$\werr$ \\
\midrule
$\pretrain$&        61.94 ($\pm$0.28)        & 27.13($\pm$0.35)  & 96.95($\pm$0.15)  &  93.01($\pm$0.58) &62.53($\pm$0.53) \\
  $\nodic_{\text{Post-hoc}}$     &    -1.62($\pm$0.36)            & -8.51($\pm$0.75)   & -11.48($\pm$0.81)  & -12.79($\pm$0.43) &-2.82($\pm$0.56)  \\
  $\text{LA}_{\text{Post-hoc}}$  &       -3.73($\pm$0.29)        & -8.72($\pm$0.66)   & -12.21($\pm$0.50)  & -15.01($\pm$0.35) &  -3.62($\pm$0.37) \\
 $\name_{\text{Post-hoc}}$ &  \textbf{-4.36}($\pm$0.25)   &  \textbf{-13.92}($\pm$0.24)  & \textbf{-14.75}($\pm$0.87)  & \textbf{-18.34}($\pm$0.47) & \textbf{-6.21}($\pm$0.49)\\
\bottomrule
		\end{tabular}}\vspace{-5pt}\caption{The error difference between other approaches compared to pre-trained model. The first line shows the performance of Pretrained model, and the following line shows the error difference of other methods (smaller is better). For objectives with $a$, we set $a = 0.2$. This is commonly used for difficult or few classes in other papers\cite{zhai2021boosted,liu2019large}. 
		}\label{table:attributes_objective}
	\end{center}
\end{table*}
In this section, we present our experiments in the following way. Firstly, we demonstrate the performance of $\name$ on both loss function design via bilevel optimization and post-hoc logit adjustment in Sec.~\ref{sec:bilevel}. 
Sec.~\ref{sec:objective} demonstrates that \name provides noticeable improvements for fairness objectives beyond balanced accuracy. Then Sec.~\ref{sec:attribute} discusses the advantage of utilizing attribute\red{s} and how $\name$ leverages them in noisy, long-tailed datasets through perturbation experiments. Lastly, we defer the experiment details including hyper-parameters, number of trails, and other reproducibility information to appendix.

\noindent\textbf{Dataset.} In line with previous research \cite{menon2020long,ye2020identifying,li2021autobalance}, we conduct the experiments on CIFAR-LT and ImageNet-LT datasets. The CIFAR-LT modifies the original CIFAR10 or CIFAR100 by reducing the number of samples in tail classes. The imbalance factor, represented as $\rho = N_{max} / N_{min}$, is determined by the number of samples in the largest ($N_{max}$) and smallest ($N_{min}$) classes. To create a dataset with the imbalance factor, the sample size decreases exponentially from the first to the last class. We use $\rho=100$ in all experiments, consistent with previous literature. The ImageNet-LT, a long-tail version of ImageNet, has 1000 classes with an imbalanced ratio of $\rho=256$. The maximum and minimum samples per class are 1280 and 5, respectively. During the search phase for bilevel $\name$,
we split the training set into 80\% training and 20\% validation to obtain the optimal loss function design. We remark that the validation set is imbalanced, with tail classes containing very few samples, making it challenging to find optimal hyper-parameters without overfitting. For all other post-hoc experiments (Sec.~\ref{sec:objective} and \ref{sec:attribute}), we follow the setup of \cite{menon2020long,hardt2016equality} by training a model on entire training dataset as the pre-train model, and optimizing a logit adjustment $g$ on a balanced validation dataset. Additionally, all CIFAR-LT experiments use ResNet-32 \cite{he2016deep}, and ImageNet-LT experiments use ResNet-50.

\subsection{$\name$ Improves Prior Methods Using Post-hoc or Bilevel Optimization} \label{sec:bilevel}

This section presents our loss function design experiments on imbalanced datasets by incorporating  \name into the training scheme of \cite{li2021autobalance,menon2020long, ye2020identifying}, as discussed in Sec.~\ref{sec:imp_bilevel_posthoc}. Table~\ref{table:dic} demonstrates our results.  The first part displays the outcomes of various existing methods with their optimal hyper-parameters.  It is worth noting that the original best results for single-level methods (\cite{menon2020long, ye2020identifying}) are obtained from grid search on the test dataset, which leads to much better performance than our reproduced results using validation grid search in Table.~\ref{table:dic}. Moreover, both of the grid search methods demand substantial computation budgets. As illustrated in the second part of Table~\ref{table:dic}, bilevel and post-hoc $\name$ significantly improve the balanced error across all datasets. 

\subsection{Benefits of $\name$  for Optimizing Distinct Fairness Objectives}\label{sec:objective}
\begin{table*}
    \centering
	\resizebox{\textwidth}{!}{
			\begin{tabular}{lllllll}
				\toprule
\multicolumn{1}{l}{\multirow{2}{*}{}}  & \multicolumn{2}{l}{CIFAR100-LT} & \multicolumn{2}{l}{ImageNet-LT} & \multicolumn{2}{l}{CIFAR10-LT+Noise}\\
\toprule
\multicolumn{1}{l}{}                  & $\berr$&$\dev$ & $\berr$&$\dev$            & $\berr$&$\dev$          \\
\midrule
Cross entropy &  61.94($\pm$0.28)              &    27.13($\pm$0.35)         &  55.59($\pm$0.26)              &   29.10($\pm$0.64)       &43.76($\pm$0.74) & 31.69($\pm$0.81) \\
 $\nodic_{\text{Bilevel}}$ (AutoBalance \cite{li2021autobalance})     &  56.70($\pm$0.32)              &    20.13($\pm$0.68)         &  50.93($\pm$0.16)              &   26.06($\pm$0.61)      &40.04($\pm$0.79) &   36.30($\pm$0.89)    \\
 $\name_{\text{Bilevel}}$: $\att_{\textsc{freq}}$     &  56.64 ($\pm$0.21)             &    19.10($\pm$0.67)         &  50.82($\pm$0.13)               &   24.36($\pm$0.49)       & 39.91($\pm$0.66)&  26.54($\pm$0.80)    \\
                           $\name_{\text{Bilevel}}$: $\att_{\textsc{diff}}$     &  58.27($\pm$0.24)              &   \textbf{17.62}($\pm$0.65)             &     52.97($\pm$0.30)  &           \textbf{21.28}($\pm$0.58)       & 40.61($\pm$0.61)&  \textbf{14.49}($\pm$0.72)   \\
                           $\name_{\text{Bilevel}}$: $\att_{\textsc{freq}}$+$\att_{\textsc{diff}}$  &   \textbf{56.38}($\pm$0.19)             &     18.53($\pm$0.63)           &    \textbf{49.31}($\pm$0.34)            &  22.14($\pm$0.46)            & \textbf{38.36}($\pm$0.79)&
19.78($\pm$0.75) \\
\bottomrule
		\end{tabular}}\caption{Attributes help optimization adapt to dataset heterogeneity. We conduct experiments using bilevel loss design and report the balanced misclassification error,and standard deviation of class-conditional errors with different class-specific attributes.}\label{table:attributes_lt}\vspace{-10pt}

\end{table*}
Recent works on label-imbalance places a significant emphasis on the balanced accuracy evaluations \cite{menon2020long,li2021autobalance,cao2019learning}. However, in practice, there are many different fairness criteria and balanced accuracy is only one of them. In fact, as we discuss in \eqref{weighted obj}, we might even want to optimize arbitrary weighted test objectives. In this section, we demonstrate the flexibility and merits of $\name$ when optimizing fairness objectives other than balanced accuracy. The experiments are conducted on the CIFAR100-LT dataset using the post-hoc approaches. For the fairness objectives, we mainly focus on three objectives: quantile class error $\quan$, conditional value at risk (CVaR) $\cv$, and the combined risk $\aggr$, which consists of standard deviation of error and the regular classification error. 

We first demonstrate the performance on quantile class error $\quan = \mathbb{P}\left[y \neq \hat{y}_{f}(\boldsymbol{x})\mid y=\mathrm{K}_{a}\right]$, where $\mathrm{K}_{a}$ denotes the class index with the worst $\lceil\mathrm{K}\times{a}\rceil$-th error. For instance, in CIFAR100-LT, where $K=100$, $\quana{0.2}$ denotes the test error of the worst 20 percentile class. That is, we sort the classes in descending order of test error and return the error of the class 20\%th class ID. Thus, each selection of $a$ raises a new objective. Fig.~\ref{fig:obj_quan} shows the improvement over the pre-trained model when optimizing $\quan$ with multiple selections of $a$. We observe that $\name$ significantly outperforms both logit adjustment and plain post-hoc. 

$\cv = \mathbb{E}\left[\err_{k}\mid \err_{k}>\quan\right]$ measures the average error of  $\lceil\mathrm{K}\times{a}\rceil$ classes with worst errors. Instead of $\quan$, which only focuses on the specific quantile class error, optimizing the $\cv$ tend to improve the tail behavior of the classifier, which is a more general fairness objective. Fig.~\ref{fig:obj_cv} shows the test improvements over three approaches, and $\name$  is consistently better than all other methods. 

Finally, for the combined risk $\aggr$, we define $\aggr = \lambda\cdot\serr+(1-\lambda)\cdot\dev$ where $\serr$ is the regular classification error and $\dev$ denotes the standard deviation of classification errors. We plot the error-deviation curve by varying $\lambda$ from 0 to 1 with stepsize 0.1 on three approaches in Fig.~\ref{fig:obj_agg}, each point corresponds to a different $\lambda$. We observe that plain post-hoc cannot achieve a small standard deviation, and post-hoc LA degrades when achieving smaller $\dev$, $\name$ accomplish the best performance and are flexible to adapt to different objectives.

When using plain post-hoc (without CAP), each class parameter is updated individually. Thus, optimizing for specific classes (e.g., $\quan$) dramatically hurts the performance of other classes which are ignored. Confirming this, we found that optimizing plain post-hoc is unstable and does not achieve decent results. On the other hand, while post-hoc LA outperforms plain post-hoc, optimizing only one temperature variable lacks fine-grained adaptation to various objectives. In contrast, $\name$ achieves a noticeably better performance on all objectives thanks to its best of both worlds design.

Table~\ref{table:attributes_objective} shows more results.  $\werr$ denotes a weighted test objective induced by weights $\bm{\omega}^{\text{test}}_k\in\R^K$ given by\vspace{-0pt}
\begin{align}
\vspace{-20pt}
    \werr=\sum_{k=1}^{K}{\bm{\omega}^{\text{test}}_k\err_{k}}\quad\text{where}\quad \sum_{k=1}^K\bm{\omega}^{\text{test}}_k=K.\label{weighted obj}
    \vspace{-20pt}
\end{align} 


Overall, Table~\ref{table:attributes_objective} shows that $\name$ consistently achieves the best results on multiple fairness objectives. An important conclusion is that, the benefit of \name is more significant for objectives beyond balanced accuracy and improvements are around 2\% or more (compared to \cite{menon2020long} or plain post-hoc). This is perhaps natural given that prior works put an outsized emphasis on balanced accuracy in their algorithm design  \cite{menon2020long,li2021autobalance}.

\subsection{Benefits of \name for Adapting to Distinct Class Heterogeneities}\label{sec:attribute}
Continuing the discussion in Sec.~\ref{sec:attributes}, we investigate the advantage of different attributes in the context of dataset heterogeneity adoption. In  Table~\ref{table:attributes_lt}, we conduct loss function design $\name$ experiments on CIFAR-LT and ImageNet-LT dataset. Specifically, besides using regular CIFAR100-LT and ImageNet-LT, we introduce label noise into CIFAR10-LT following \cite{tanaka2018joint,reed2014training} to extend the heterogeneity of the dataset. To add the label noise, firstly, we split the training dataset to 80\% train and 20\% validation to accommodate bilevel optimization. Then we randomly generate a noise ratio $\bm{r}\in \R^K, \bm{r}_i \sim U(0,0.5)$ that denotes the label noise ratio for each class. Finally, keeping the validation set clean, we add label noise into the train set by randomly flipping the labels of selected training samples (according to the noise ratio) to all possible labels. As a result, all classes contain an unknown fraction of label noise in the noisy CIFAR10-LT dataset, which raises more heterogeneity and challenge in optimization. Through bilevel optimization, we optimize the balanced classification loss and report the balanced test error and its standard deviation after the retraining phase in Table~\ref{table:attributes_lt}. As shown in Table~\ref{table:attributes_lt}, we employ label frequency $\attfreq$ which is designed for sample size heterogeneity and $\attdiff$ which is designed for class predictability as the attributes in $\name$ approach. Table~\ref{table:attributes_lt} highlights that $\name$ consistently outperforms other methods while different attributes can shape the optimization process differently. Importantly, CAP is particularly favorable to tail classes which contain too few examples to optimize individually. Only using $\attdiff$ achieves smallest $\dev$ demonstrating that optimization with $\attdiff$ tends to keep better class-wise fairness because $\attdiff$ is directly related to class predictability. The combination of $\attfreq$ and $\attdiff$ shows that incorporating multiple class-specific attributes provides additional information about the dataset and jointly enhances performance. Overall, the results indicate that $\name$ establishes a principled approach to adapt to multiple kinds of heterogeneity.

\section{Discussion}\label{sec:con}
We proposed CAP as a flexible method to tackle class heterogeneities and general fairness objectives. CAP achieves high performance by efficiently generating class-specific strategies based on their attributes. We presented strong theoretical and empirical evidence on the benefits of multiple attributes. Evaluations on post-hoc optimization and loss function design revealed that $\name$ substantially improves multiple types of fairness objectives as well as general weighted test objectives. In Appendix D, we also demonstrate the transferability across our strategies: Post-hoc CAP can be plugged in as a loss function to further boost accuracy.


\noindent\textbf{Broader impacts.} Although our approach and applications primarily focus on loss function design and posthoc optimization, CAP approach can also help design class-specific data augmentation, regularization, and optimizers. Additionally, rather than heterogeneities across classes, one can extend CAP-style personalization to problems in multi-task learning and recommendation systems. \textbf{Limitations.} With access to infinite data, one can search for optimal strategies for each class. Thus, the primary limitation of CAP is its multi-task design space that shares the same meta-strategy across classes. However, as experiments demonstrate, in practical finite data settings, CAP achieves better data efficiency, robustness, and test performance compared to individual tuning.



\clearpage
\section*{Acknowledgements}
This work was supported by the NSF grants CCF-2046816 and CCF-2212426, NSF CAREER 1942700, NSERC Discovery Grant RGPIN-2021-03677, Google Research Scholar award, Adobe Data Science Research award, and Army Research Office grant W911NF2110312.
\bibliography{refs}

\clearpage

\newpage
\onecolumn
\appendix

\section{List of fairness objectives}\label{app:syms}
We list all the notation of objectives we used in the main paper in this section.

\begin{tabular}{ll}
\hline  Symbol & Meaning \\
\hline$\ell, f$ & Loss function (specifically cross-entropy), predictor \\
$\err(f)$ & Error of $f$ on entire population \\
$\kerr$ & Class-conditional error of $f$ on class $\mathrm{K}=\mathrm{k}$ \\
$\serr$ & Standard misclassification error\\
$\berr$ & Balanced misclassification error, average of class-conditional errors\\
$\werr$ & Weighted misclassification error\\
$\dev$ & Standard deviation of class-conditional errors\\
$\quan$ & Errors of quantile classes at level $a$\\
$\cv$ & Conditional value of errors  at level $a$\\
$\aggr$ & Aggregation of class-conditional errors \\
\hline
\end{tabular}
\section{Framework overview. }\label{app:algo2}

\begin{figure*}[]
\vspace{-20pt}
\centering
\begin{tikzpicture}
   \node at (0,0) {\includegraphics[scale=0.4]{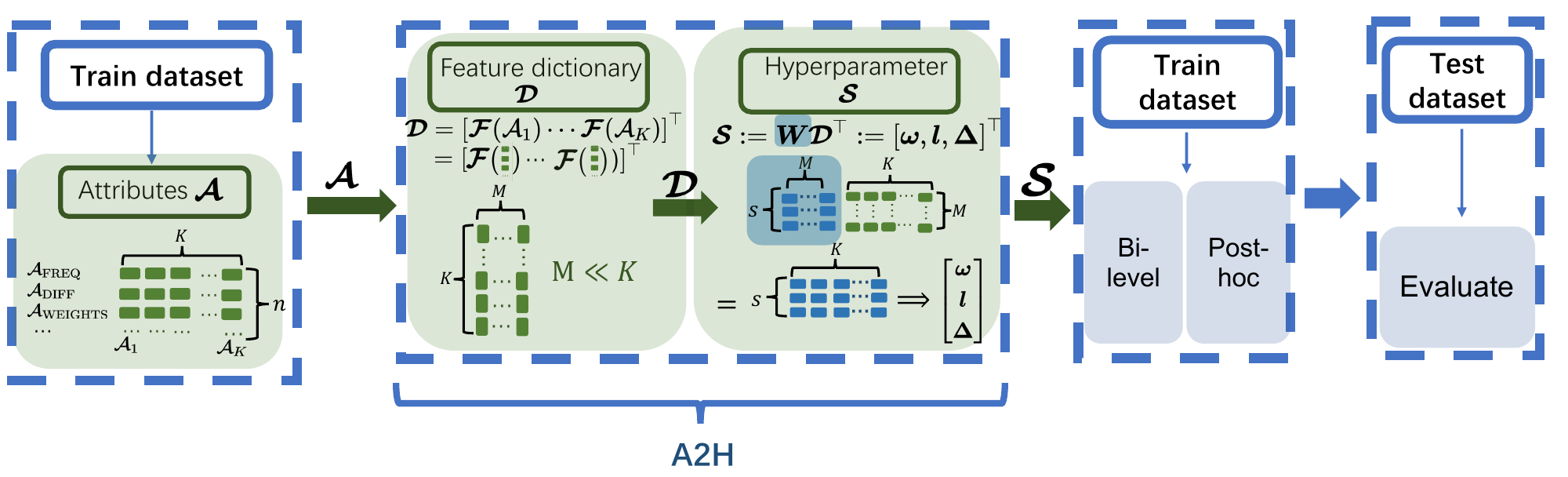}};
\end{tikzpicture}
\vspace{-12pt}
\caption{The overview of $\name$ approach. \name is the overall framework proposed in our paper, with $\Strat$ being the core algorithm. $\Strat$ is a meta-strategy that transforms the class-attribute prior knowledge into hyper-parameter $\bm{\strat}$ for each class through a trainable matrix $\mathbf{W}$, forming a training strategy that satisfies the desired fairness objective.
The left half of the figure specifically illustrates how our algorithm calculates and trains the weights. In the first stage, we collect class-related information and construct an attribute table of $n \times K$ dimension. This is a general prior, which is related to the distribution of training data, the training difficulty of each class, and other factors. Then, he first step of $\Strat$ is to compute a $K\times M$ Feature Dictionary $\Dic=\bm{\Func}(\bm{\att})$ by applying a set of functions $\bm{\Func}$. We remark that $M << K$ and $M$ is only related to the number of attributes $n$ and $|\bm{\Func}|$, making it a constant. Therefore, the search space is $\mathcal{O}(1)$. Then, in the second step, the weight matrix $\mathbf{W}$ is trained through bi-level or post-hoc methods to construct the hyperparameter $\bm{\strat}$.}

\label{fig:a2h}
\end{figure*}
\begin{figure*}[t]
\vspace{-20pt}
\centering
\begin{tikzpicture}
   \node at (0,0) {\includegraphics[scale=0.4]{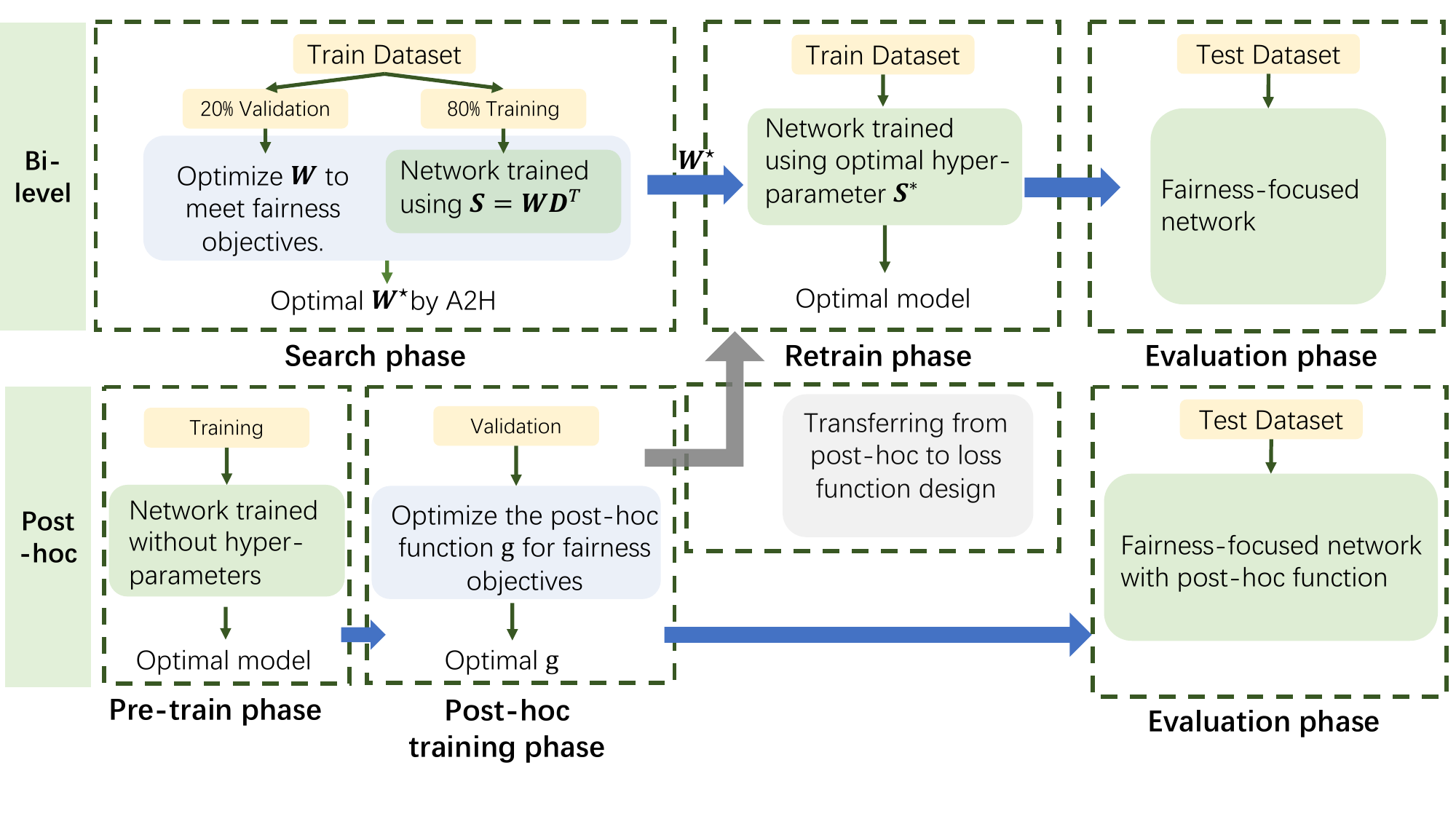}};
\end{tikzpicture}
\caption{\name framework for detailed implementation. This figure illustrates how \name is implemented under bi-level optimization and post-hoc optimization. Throughout the entire figure, the only trainable parameters are $\textbf{W}$ and the network (in the green box). In the search phase of bilevel optimization, we first conduct an 80-20\% train-val split. Then, we train the network with parametric loss function for inner optimization on 80\% training dataset and train $\textbf{W}$ to achieve fairness objective for outer optimization on 20\% validation dataset. And in post-hoc implementation, we first train the network without hyperparameters on the training dataset and do the post-hoc optimization on the validation set. Both bilevel and post-hoc yield optimal fairness weight $\textbf{W}^*$, for bi-level and post-hoc transferring, we use the optimal $\mathbf{W}^*$ to retrain a fairness-focused model on the entire training dataset. If only post-hoc adjustments are conducted, we directly modify the pre-trained model's logit with a post-hoc function. }
\label{fig:implementation}
\end{figure*}
\section{Extended Discussion of Warm-up and Training Stability}\label{app:warmup}

In Sec.~\ref{loss func}, we discuss how $\name$ stabilizes the training and eases the necessarily of warm-up. Now, we extend the discussion and provide more experiments to demonstrate further the benefit of the $\name$ strategy in this section. In Table~\ref{table:warmup_result}, we conduct experiments on bilevel loss function design on CIFAR10-LT. Firstly, we investigate the performance of the default initialization (DI) of $\text{Plain}_{Bilevel}$ where $\lb=\bm{0}$ and $\Deltab=\bm{1}$ with 100,120 and 200 warm-up epochs. Then we provide the result where $\lb$ starts with logit adjustment prior. Finally, we implement the self-supervision pre-trained model by SimSiam~\cite{chen2021exploring}. Table~\ref{table:warmup_result}
presents the relationship between the $\dev$ of the pre-trained model and the final $\berr$ after bilevel training. One direct observation is that $\dev$  highly correlates with $\berr$. Considering $\dev$ measures the fairness of the pre-trained model, we believe that a better pre-trained model promotes the test performance accordingly. 

Moreover, regarding LA initialization, one can conclude that initializing the training with a designed loss such as LA loss can significantly improve the result. Still, it requires additional effort and expertise in designing that specific loss, especially when the fairness objective is not only balanced error and various heterogeneities exist in the data. While the self-supervised pre-trained model achieves the best $\dev$ and $\berr$ among all methods, training the self-supervision model requires a long time. Our proposed $\name_{Bilevel}$, which utilizes the attributes, not only ensures to take advantage of prior knowledge but also stabilizes the optimization by simultaneously updating weights of all classes thanks to the dictionary design. $\name_{Bilevel}$ achieves \textbf{20.16} $\berr$ on CIFAR10-LT and \textbf{56.55} $\berr$ on CIFAR100-LT with only \textbf{5} epochs of warm-up, which improves on both computation efficiency and test performance.

\begin{table}[]
\centering
\resizebox{\textwidth}{!}{
			\begin{tabular}{lccccc}
				\toprule
				& DI, 100 epoch & DI, 120 epoch & DI, 200 epoch &LA , 120 epoch & Self-sup\cite{chen2021exploring} \\
               \toprule
               $\dev$ when search phase begin & 0.23 &      0.20  & 0.28 &        0.17  &  0.13 \\
				\midrule
				$\berr$ & 24.58 & 21.39      &  23.36 & 21.15         & 20.57  \\
				\bottomrule 
		\end{tabular}}\caption{Bilevel training with different warm-up lead to different result on CIFAR10-LT. We investigate the performance of the default initialization (DI) of $\text{Plain}_{bilevel}$ where $\lb=\bm{0}$ and $\Deltab=\bm{1}$ with 100,120 and 200 warm-up epochs, and we also provide the result where $\lb$ starts with logit adjustment prior. We implement the self-supervision pre-trained model by SimSiam~\cite{chen2021exploring}. We remark that 120 epochs Warm-up with DI or LA loss are used in \cite{li2021autobalance}.}\label{table:warmup_result}\vspace{-10pt}

\end{table}

\section{Further post-hoc discussion}\label{app:transfer}

\begin{table}[]
\centering
			\begin{tabular}{llllll}
			\toprule
			\multicolumn{2}{l}{}              & \multicolumn{2}{l}{CIFAR10-LT}& \multicolumn{2}{l}{CIFAR100-LT} \\
				\toprule
				\multicolumn{2}{l}{}              & search phase & retrain& search phase & retrain \\ 
		\toprule		
 \multicolumn{2}{l}{Post-hoc LA\cite{menon2020long} }  & 21.43($\pm$0.30)        & 22.34($\pm$0.34)&58.48($\pm$0.23)&57.65($\pm$0.25)   \\
\multicolumn{2}{l}{Post-hoc CDT\cite{ye2020identifying} }             & 23.58($\pm$0.37)        & 21.79($\pm$0.40) &58.60($\pm$0.26)&57.86($\pm$0.27)  \\ 
\midrule
    \multicolumn{1}{l}{\multirow{3}{*}{$\nodic_{\text{Post-hoc}}$}}   & $\boldsymbol{l}$                      & 20.90($\pm$0.28)        & 21.71($\pm$0.29) &57.98($\pm$0.22)&57.82($\pm$0.19)  \\ 
\multicolumn{1}{l}{}                                         & $\boldsymbol{\Delta}$                      & 23.74($\pm$0.34)        & 24.06($\pm$0.36) &58.61($\pm$0.29)&58.80($\pm$0.31)   \\  
\multicolumn{1}{l}{}                                         & $\boldsymbol{l}$\&$\boldsymbol{\Delta}$                  & 23.41($\pm$0.30)        & 23.38($\pm$0.33) &57.80($\pm$0.24)&58.57($\pm$0.23) \\                  

\midrule

\multicolumn{1}{l}{\multirow{3}{*}{$\name_{\text{Post-hoc}}$}}   & $\w_\lb$                      & 20.81 ($\pm$0.15)       & 20.65($\pm$0.36) &57.73($\pm$0.25)&57.15($\pm$0.30)  \\  
\multicolumn{1}{l}{}                                         & $\w_\Deltab$                      & 22.31($\pm$0.38)        & 21.06($\pm$0.43)  &58.07($\pm$0.32)&57.26($\pm$0.35) \\  

\multicolumn{1}{l}{}                                         & $\w_\lb$ \& $\w_\Deltab$                 & 20.87($\pm$0.38)        & 20.32($\pm$0.64) &57.63($\pm$0.26)&57.08($\pm$0.21)  \\ 
\bottomrule
		\end{tabular}\caption{Balanced error on long-tailed data using post-hoc logits adjustment. The search phase results reveal the test accuracy of post-hoc adjustment, which is searched on a 20\% validation set. The retrain results show the transferability from post-hoc logits adjustment to loss function design. }\label{table:posthoc}\vspace{-10pt}
\end{table}
\noindent{\textbf{Connection to post-hoc adjustment}}\label{sec:posthoc} To better understand the potential of $\name$ and the connection between loss function design and post-hoc adjustment, we design an experiment with results shown in Table~\ref{table:posthoc}. In this experiment, we use the same \dictionary, split the original training data to $80\%$ train and $20\%$ validation, and train a model $f$ using regular cross-entropy loss on the $80\%$ train set as the pre-trained model, which is biased toward the imbalanced distribution. Our goal is to find a post-hoc adjustment $g$ so that $g\circ f$ achieves minimum balanced loss on the $20\%$ validation set. In Table~\ref{table:posthoc}, the searching phase displays the test error of adjusted model $g\circ f$. Following the transferability discussion in Sec.~\ref{sec:imp_bilevel_posthoc}, we use the searched post-hoc adjustment as the loss function design to retrain the model from scratch on the entire training dataset. Interestingly, retraining further improves the post-hoc performance. As post-hoc adjustment requires only about 1/5 of the time and fewer computational resources than loss function design, it provides a simple and efficient approach for loss function design.

We also observe that training $\w_\lb$ along with $\w_\Deltab$ leads to performance degradation compared to only training $\w_\lb$, and training only $\w_\Deltab$ also performs worse than $\w_\lb$. We conduct more experiments and provide explanations for this. In each part of the Table~\ref{table:posthoc}, we compare the performance of optimizing $\lb$ and $\Deltab$ in the similar setup, for example, LA provides a design of $\lb$ while CDT adjusts the loss by design a specific $\Deltab$. Among all the methods, optimizing $\lb$ or $\w_\lb$ always achieve the best result. We observe a degeneration when optimizing only $\Deltab$ or both $\lb\&\Deltab$. Through this section, Fig.~\ref{fig:app:scatter} and \ref{fig:error_loss} exhibit some insights and intuitions towards this phenomenon.

Fig.~\ref{fig:app:scatter} shows the logits value before and after post-hoc adjustment. Without proper early-stopping or regularization, $\boldsymbol{\Delta}$ in Fig.~\ref{fig:app:scatter_c} will keep increasing and result in a stretched logits distribution, where the logits become larger and larger. Note that Fig~\ref{fig:app:scatter_c} stops after 500 epochs, but longer training will even further enlarge the logits. Furthermore, because the data is not linear separable, $\boldsymbol{\Delta}$ may reduce the loss in unexpected ways. The mismatch between test loss and balanced test error in Fig.~\ref{fig:app:wd} verified this conjecture. The loss decreases at the end of the training while the balanced error increases. That might happen because  $\boldsymbol{\Delta}$ performs a multiplicative update on logits as shown in Fig.~\ref{fig:app:scatter_c}. Finally, the logits value becomes much larger, but the improvement is limited. Lemma 1 in paper \cite{li2021autobalance} also offers possible explanation by proving loss function is not consistent for standard or balanced errors if there are distinct multiplicative adjustments i.e. $\Delta_{i} \neq \Delta_{j}$ for some $i, j \in[K]$.

In sum, the main difference between using the two different hyperparameters for post-hoc logit adjustment is that $\boldsymbol{l}$ performs an additive update on logits, however,  $\boldsymbol{\Delta}$ performs a multiplicative update. That will leads to different behaviors. For example, if there is a true but rare label $k=i$ with negative logits value $\lgt_{i}$; meanwhile, there are other labels with positive or negative values, multiplicative update using $\boldsymbol{\Delta}$ couldn't help label $k$ changes the class because the logits is already negative. For post-hoc logit adjustment using $\boldsymbol{l}$, it can eliminate the influence of the original value. Smaller values of $\boldsymbol{l}_{i}$ could always make $\hat{\lgt}_{i} = \lgt_{i}-\lb_{i}$ have a larger boost than $\hat{\lgt}_{k\neq i}$ Fig.\ref{fig:app:scatter} indeed shows that there exist many samples like this.

\begin{figure}
\centering
\begin{subfigure}{0.32\linewidth}
    \centering
	\begin{tikzpicture}
		\node at (0,0) [scale=0.29]{\includegraphics{./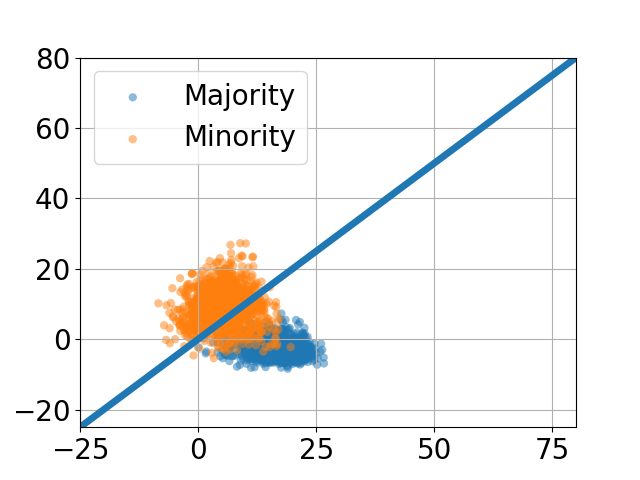}};
		\node at (-2.4 ,0) [scale=0.8,rotate=90] {Minority logits value ($f_{y=9}(x)$)};
		\node at (0,-1.9) [scale=0.8] {Majority logits value ($f_{y=0}(x)$)};
		\node at(1,0.5) [scale=0.6,rotate=37.5] {Decision boundary};
	\end{tikzpicture}\vspace{-4pt}\caption{Pretrain}\label{fig:app:scatter_a}
\end{subfigure}
\begin{subfigure}{0.32\linewidth}
    \centering
	\begin{tikzpicture}
		\node at (0,0) [scale=0.29]{\includegraphics{./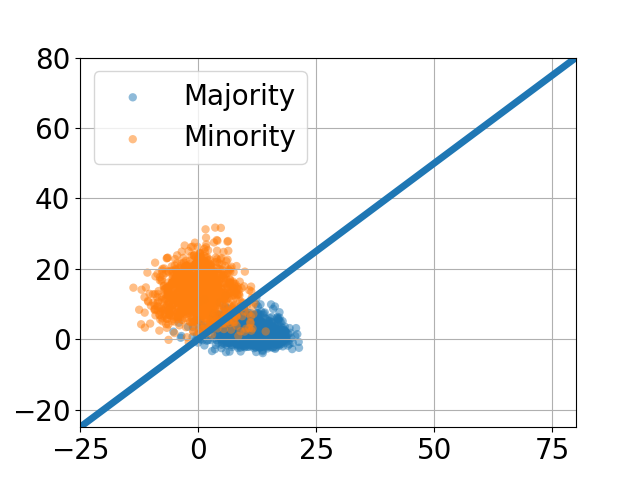}};
		\node at (0,-1.9) [scale=0.8] {Majority logits value ($f_{y=0}(x)$)};
		\node at(1,0.5) [scale=0.6,rotate=37.5] {Decision boundary};
	\end{tikzpicture}\vspace{-4pt}\caption{$\name_{\text{Post-hoc}}$:$\w_\lb$}\label{fig:app:scatter_b}
\end{subfigure}
\begin{subfigure}{0.32\linewidth}
    \centering
	\begin{tikzpicture}
		\node at (0,0) [scale=0.29]{\includegraphics{./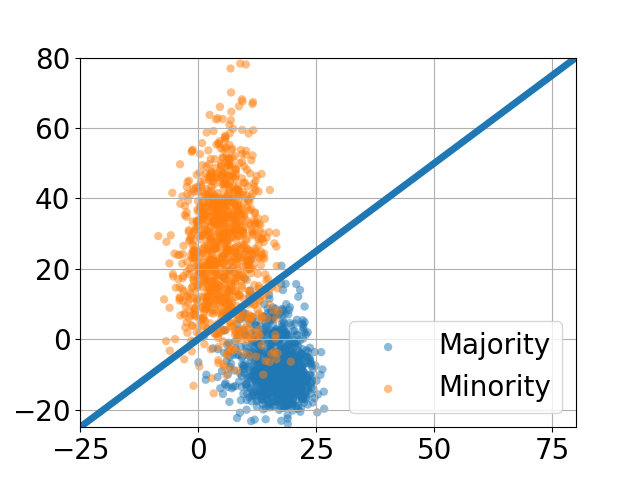}};
		\node at (0,-1.9) [scale=0.8] {Majority logits value ($f_{y=0}(x)$)};
		\node at(1,0.5) [scale=0.6,rotate=37.5] {Decision boundary};
	\end{tikzpicture}\vspace{-4pt}\caption{$\name_{\text{Post-hoc}}$:$\w_\Deltab$}\label{fig:app:scatter_c}
\end{subfigure}
\caption{ The evolution of logits in post-hoc logits adjustment $\name$ when optimizing $\w_\lb$ and $\w_\Deltab$ individually. In this experiment, we train a ResNet-32 as the pre-trained model on CIFAR10-LT, where the class $y=0$ has the largest sample size and $y=9$ has the smallest sample size when training. In Fig.~\ref{fig:app:scatter_a}, we
plot the logits value $f^{test}_y(x)$ of \textbf{test dataset}. Specifically, for better visualization and understanding, we only pick two classes, the largest class ($y=0$) as majority  and the smallest class ($y=9$) as minority. The x-axis is the logit value of majority class $f_{y=0}(x)$ and the y-axis is the logit value of minority class $f_{y=9}(x)$. Thus, the blue line ($y=x$) can be treated as the decision boundary between the two classes. In Fig.~\ref{fig:app:scatter_b} shows the logits after $\name_\text{Post-hoc}$ with only optimizing $\w_\lb$ and Fig.~\ref{fig:app:scatter_b} shows the logits after $\name_\text{Post-hoc}$ that only optimizing $\w_\Deltab$. For clarification, the logits are directly picked from CIFAR10-LT classification problem which are not binary classification logits.  We also remark that any choice of majority and minority that satisfies $N_\text{majority}^\text{train}>N_\text{minority}^\text{train}$ shows the similar result even under another training distribution differed from CIFAR10-LT (e.g. flipping the minority and majority).}\vspace{-10pt}
\label{fig:app:scatter}
\end{figure}

\begin{figure}
\centering
\begin{subfigure}{2.5in}
    \centering
	\begin{tikzpicture}
		\node at (0,0) [scale=0.3]{\includegraphics{./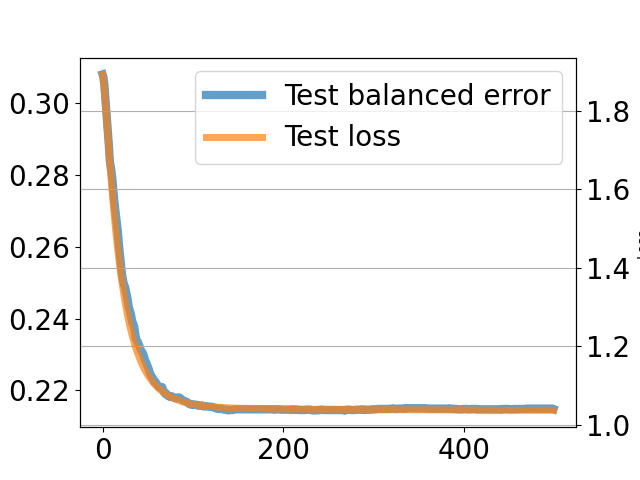}};
		\node at (-2.6 ,0) [scale=0.8,rotate=90] {Balanced error};
		\node at (2.6 ,0) [scale=0.8,rotate=90] {Loss};
		\node at (0,-1.9) [scale=0.8] {Epoch};
		
	\end{tikzpicture}\vspace{-4pt}\caption{$\name_{\text{Post-hoc}}$:$\w_\lb$}\label{fig:app:wl}
\end{subfigure}
\begin{subfigure}{2.5in}
    \centering
	\begin{tikzpicture}
		\node at (0,0) [scale=0.3]{\includegraphics{./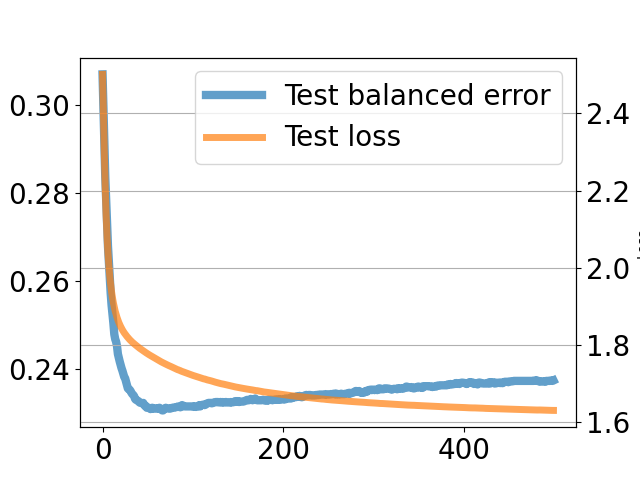}};
		\node at (-2.6 ,0) [scale=0.8,rotate=90] {Balanced error};
		\node at (2.6 ,0) [scale=0.8,rotate=90] {Loss};
		\node at (0,-1.9) [scale=0.8] {Epoch};
	\end{tikzpicture}\vspace{-4pt}\caption{$\name_{\text{Post-hoc}}$:$\w_\Deltab$}\label{fig:app:wd}
\end{subfigure}
\caption{Test error and loss during CIFAR10-LT post-hoc training. In Fig.~\ref{fig:app:wl} we only optimize $\w_\lb$ and we observe that balanced test error decreases with test loss simultaneously. However, in Fig.~\ref{fig:app:wd} where we only optimize $\w_\Deltab$, the test loss (the orange curve) is keeping decreasing, but test balanced error (the blue curve) first reaches minimum and then increases. This mismatch together with Fig.~\ref{fig:app:scatter} further explain the reason of degeneration when optimize $\w_\Deltab$ by post-hoc.
}
\label{fig:error_loss}
\end{figure}

\section{Proofs of Fisher Consistency on Weighted Loss}\label{app:potential}
For more insight of the weighted test loss we discussed in Sec.~\ref{sec:attribute}, \cite{menon2020long} proposes a family of Fisher consistent pairwise margin loss as
\[\ell(y,f(x))=\alpha_y\cdot \log[1+\sum_{y'\neq y}{e^{\Delta_{yy'}}\cdot e^{f_{y'}(x)-f_{y}(x)}}]
\]
where pairwise label margins $\Delta_{yy'}$ denotes the desired gap between scores for $y$ and $y'$. Logit adjustment loss \cite{menon2020long} corresponds to the situation where $\alpha_y=1$ and $\Delta_{yy'}=\log\frac{\pi_{y'}}{\pi_y}$ where $\pi_y=\Pro(y)$. They establish the theory showing that there exists a family of pairwise loss, which Fisher consistent with balanced loss when $\Delta_{yy'}=\log\frac{\alpha_{y'}\pi_{y'}}{\alpha_y\pi_y}$ for any $\alpha\in\R^{K}_+$. However, Sec.~\ref{sec:attribute} focuses on the weighted loss which is more general and formulated as following. 
\begin{align}\ell_{\omega}(y,f(x))=\alpha_y\cdot\omega_y^{test} \log[1+\sum_{y'\neq y}{e^{\Delta_{yy'}}\cdot e^{f_{y'}(x)-f_{y}(x)}}]\label{eqn:weighted}
\end{align}
Following \cite{menon2020long}, Thm.~\ref{thm:fisher} deduces the family of Fisher-consistent loss with weighted pairwise loss. The followed discussion demonstrates that $\name$ using $\attfreq$ and $\attw$ is able to recover Fisher-consistent loss for any $\omega^{test}$.

\begin{theorem}\label{thm:fisher}
For any $\delta\in\R_{+}^K$, the weighted pairwise loss (\ref{eqn:weighted}) is Fisher consistent with weights and margins
$$\alpha_y=\frac{\delta_y}{\pi_y}\quad\Delta_{yy'}=log(\delta_y'/\delta_y)$$
\end{theorem}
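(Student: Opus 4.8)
Fisher consistency here means that the pointwise population minimizer of the surrogate $\ell_\omega$, passed through $\arg\max$, recovers the Bayes-optimal classifier for the weighted error $\werr=\sum_k\omega^{\text{test}}_k\err_k$. Following the strategy of \cite{menon2020long} for balanced error, the plan splits into two halves: first identify this Bayes rule, then compute the minimizer of the conditional surrogate risk and check that the two coincide under the stated $(\alpha_y,\Delta_{yy'})$, \emph{for every} $\delta\in\R^K_+$.

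First I would pin down the target. Writing $p_k(\x):=\Pro(y=k\mid\x)$ and using $\Pro(\x\mid y=k)=p_k(\x)\Pro(\x)/\pi_k$, the weighted error decomposes as $\werr=\sum_k\omega^{\text{test}}_k\,\Pro[\hat y\neq k\mid y=k]$; swapping the integral over $\x$ with the sum shows that the integrand at each $\x$ is maximized by predicting $\hat y^\star(\x)=\arg\max_j\,\omega^{\text{test}}_j\,\Pro(\x\mid y=j)=\arg\max_j\,\omega^{\text{test}}_j\,p_j(\x)/\pi_j$. This is the rule the surrogate must reproduce.

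Next I would minimize the conditional risk $\E_{y\mid\x}[\ell_\omega(y,f(\x))]$ over $f(\x)\in\R^K$ pointwise. The key simplification is that plugging in $\Delta_{yy'}=\log(\delta_{y'}/\delta_y)$ telescopes the pairwise sum: $1+\sum_{y'\neq y}e^{\Delta_{yy'}}e^{f_{y'}-f_y}=Z(f)/(\delta_y e^{f_y})$ with $Z(f):=\sum_{y'}\delta_{y'}e^{f_{y'}}$, so the loss collapses to $\ell_\omega(y,f)=\alpha_y\omega^{\text{test}}_y\big(\log Z(f)-\log\delta_y-f_y\big)$. Setting $q_j:=p_j(\x)\alpha_j\omega^{\text{test}}_j$, the conditional risk becomes $\big(\sum_j q_j\big)\log Z(f)-\sum_j q_j f_j+\text{const}$, which is convex in $f$ (log-sum-exp composed with an affine map, plus a linear term). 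Its stationarity condition $\partial/\partial f_j=0$ yields $\delta_j e^{f^\star_j}\propto q_j$, i.e.\ $e^{f^\star_j}\propto p_j(\x)\alpha_j\omega^{\text{test}}_j/\delta_j$.

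The crux is that the choice $\alpha_j=\delta_j/\pi_j$ makes the $\delta_j$ cancel exactly: $e^{f^\star_j}\propto p_j(\x)\omega^{\text{test}}_j/\pi_j$, which is independent of the free parameter $\delta$. Taking $\arg\max_j f^\star_j$ then reproduces $\hat y^\star(\x)$ from the second step, establishing consistency for every $\delta$. The main obstacle is bookkeeping rather than conceptual depth: I must (i) justify the reduction to pointwise minimization and the global optimality of the convex stationary point, and (ii) handle the one-dimensional degeneracy of $f^\star$ — since the loss depends only on the differences $f_{y'}-f_y$, the minimizer is determined only up to an additive constant (consistently, $\sum_j\partial/\partial f_j\equiv0$). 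As this shift leaves $\arg\max$ unchanged, the prediction remains well defined, and it is precisely this $\delta$-cancellation forced by $\alpha_y=\delta_y/\pi_y$ that must be verified carefully to close the argument.
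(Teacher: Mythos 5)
Your proof is correct and reaches the same destination as the paper's, but via a somewhat different middle step. Both arguments hinge on the same two observations: the choice $\Delta_{yy'}=\log(\delta_{y'}/\delta_y)$ telescopes the pairwise sum so the loss collapses to a $\delta$-shifted softmax cross-entropy, and the choice $\alpha_y=\delta_y/\pi_y$ makes $\delta$ cancel in the resulting Bayes-optimal score, leaving $e^{f^\star_j}\propto \omega^{\text{test}}_j p_j(\x)/\pi_j$. Where you differ is in how the optimal score is obtained: you minimize the conditional surrogate risk pointwise by a direct convexity-plus-stationarity computation ($\delta_j e^{f^\star_j}\propto q_j$ with $q_j=p_j(\x)\alpha_j\omega^{\text{test}}_j$), whereas the paper absorbs the $\alpha$-weighting into a tilted prior $\bar\pi_y\propto\pi_y\alpha_y$ and then reads off the Bayes-optimal score of the collapsed loss under that tilted distribution, following the reduction of Menon et al. Your route is more self-contained and arguably cleaner: it makes explicit the convexity that guarantees the stationary point is a global minimizer, it handles the additive-shift degeneracy of $f^\star$ that the paper leaves implicit in its ``$+C(\x)$'' term, and it derives the target Bayes rule $\arg\max_j\omega^{\text{test}}_j p_j(\x)/\pi_j$ for $\werr$ from first principles rather than asserting it. The paper's reduction, on the other hand, buys a cleaner conceptual statement (weighting the loss is equivalent to reweighting the prior) that it reuses when connecting back to the balanced-error case. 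No gaps in your argument; the only bookkeeping worth adding in a full write-up is the one line identifying the pointwise minimizer of $\werr$, which you already sketch.
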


\begin{proof}
Suppose we use margin $\Delta_{yy'}=\log\frac{\delta_{y'}}{\delta_y}$, the weighted loss become

\begin{align*}
\ell_{\omega}(y,f(x))&=- \omega_y^{test}\log\frac{\delta_y e^{f_y(x)}}{\sum_{y'\in[K]}\delta_{y'}e^{f_{y'}(x)}}\\
&=-\omega_y^{test}\log \frac{e^{f_y(x)+\log(\delta_y)}}{\sum_{y'\in[K]}e^{f_{y'}(x)+\log(\delta_{y'})}}\\
\end{align*}

Let $\Pro_{\omega}(y\mid x)\propto \omega_y\Pro(y\mid x)$ denote the distribution with weighting $\omega$. The Bayes-optimal score of the weighted pairwise loss will satisfy $f^*_y(x)+\log(\delta_y)=\log\Pro_{\omega}(y\mid x)$, which is $f^*_y(x)=\log\frac{\Pro_\omega(y\mid x)}{\delta_y}$.

Suppose we have a generic weights $\alpha\in\R^K_{+}$, the risk with weighted loss can be written as

$$
\begin{aligned}
\mathbb{E}_{x, y}\left[\ell_{\omega,\alpha}(y, f(x))\right] &=\sum_{y \in[L]} \pi_{y} \cdot \mathbb{E}_{x \mid y=y}\left[\alpha_y\ell_\omega(y, f(x))\right] \\
&=\sum_{y \in[L]} \pi_{y} \alpha_y  \cdot \mathbb{E}_{x \mid y=y}\left[\ell_\omega(y, f(x))\right] \\
& \propto \sum_{y \in[L]} \bar{\pi}_{y} \cdot \mathbb{E}_{x \mid y=y}[\ell_\omega(y, f(x))]
\end{aligned}
$$
where $\bar{\pi}_{y} \propto \pi_{y} \alpha_y$. That means by modify the distribution base to $\bar{\pi}$, learning with the $\omega$ and $\alpha$ weighted loss \ref{eqn:weighted} is equivalent to learning with the $\omega$ weighted loss. Under such a distribution, we have class-conditional distribution.
$$
\overline{\Pro}(y \mid x)=\frac{\Pro_{\omega}(x \mid y) \cdot \bar{\pi}_{y}}{\overline{\mathbb{P}}(x)}=\Pro_{\omega}(y \mid x) \cdot \frac{\bar{\pi}_{y}}{\pi_{y}} \cdot \frac{\Pro_{\omega}(x)}{\overline{\Pro}(x)} \propto \Pro_{\omega}(y \mid x)  \cdot \alpha_{y} \omega_{y}^{test}
$$

Then for any $\delta\in\R^{K}_{+}$, let $\alpha=\frac{\delta_{y}}{\pi_{y}}$, the Bayes-optimal score will satisfy 
$
f_{y}^{*}(x)=\log \frac{\overline{\Pro}(y\mid x)}{\delta_{y}}=\log \frac{\Pro_{\omega}(y\mid x)}{\pi_{y}}+C(x)
$
where $C(x)$ does not depend on $y$. Thus, $\operatorname{argmax}_{y \in[L]} f_{y}^{*}(x)=\operatorname{argmax}_{y \in[L]} \frac{\Pro_{\omega}(y\mid x)}{\pi_{y}}$, which is the Bayes-optimal prediction for the weighted error. 

In conclusion, there is a consistent family of weighted pairwise loss by choose any set of $\delta_{y}>0$ and letting
$$
\begin{aligned}
\alpha_{y} &=\frac{\delta_{y}}{\pi_{y}} \\
\Delta_{y y^{\prime}} &=\log \frac{\delta_{y^{\prime}}}{\delta_{y}} .
\end{aligned}
$$
\end{proof}

\begin{corollary}
In $\name$, setting attributes as [$\attfreq$,$\attw$], $\Func=[log(\cdot)]$. When $\w_\lb=[1,-1]$, $\name$ fully recovers a loss (\ref{eqn:fisher_loss}), which is Fisher-consistent with weighted pairwise loss.
\end{corollary}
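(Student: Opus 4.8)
The plan is to reduce the weighted pairwise loss to a weighted softmax cross-entropy, compute its pointwise (conditional) Bayes-optimal score, and then verify that the induced argmax rule coincides with the Bayes-optimal classifier for the weighted error $\werr$.

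First I would substitute the proposed margins $\Delta_{yy'}=\log(\delta_{y'}/\delta_y)$ into (\ref{eqn:weighted}). Pulling $e^{\Delta_{yy'}}=\delta_{y'}/\delta_y$ inside the sum and clearing the common factor $\delta_y e^{f_y(x)}$ turns the bracket into a single ratio, so the loss collapses to the shifted-softmax form
\[
\ell_{\omega}(y,f(x)) = -\,\alpha_y\,\omega_y^{test}\,\log\frac{\delta_y e^{f_y(x)}}{\sum_{y'\in[K]}\delta_{y'} e^{f_{y'}(x)}}.
\]
This is the crucial simplification: it exhibits the objective as a per-class-weighted cross-entropy against the softmax of the shifted logits $f_y(x)+\log\delta_y$, which is exactly the structure for which Bayes-optimal scores are easy to read off.

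Next I would compute the population minimizer. Since the risk decomposes over $x$, it suffices to minimize the conditional risk pointwise. I would first isolate the effect of the generic weights $\alpha_y$: because $\alpha_y$ multiplies a class-conditional expectation, the factor $\pi_y\alpha_y$ behaves exactly like a tilted prior $\bar\pi_y\propto\pi_y\alpha_y$, so learning the $\alpha$-weighted loss is equivalent to learning the plain $\omega$-loss under base distribution $\bar\pi$. Under this tilt the effective posterior becomes $\overline{\P}(y\mid x)\propto \P_\omega(y\mid x)\cdot\alpha_y\omega_y^{test}$, where $\P_\omega(y\mid x)\propto\omega_y^{test}\,\P(y\mid x)$. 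Matching the shifted-softmax against this posterior then yields the Bayes-optimal score $f^*_y(x)=\log\overline{\P}(y\mid x)-\log\delta_y$ up to an additive term independent of $y$.

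Finally I would substitute the prescribed $\alpha_y=\delta_y/\pi_y$, so that the $\delta_y$ appearing in $\overline{\P}(y\mid x)$ cancels the $-\log\delta_y$ shift, leaving $f^*_y(x)=\log(\P_\omega(y\mid x)/\pi_y)+C(x)$ with $C(x)$ free of $y$; hence $\operatorname{argmax}_y f^*_y(x)=\operatorname{argmax}_y \P_\omega(y\mid x)/\pi_y$. To close the argument I would separately derive the Bayes rule for $\werr=\sum_k\omega_k^{test}\kerr$ by rewriting it as $\E_{x,y}[(\omega_y^{test}/\pi_y)\,\mathbb{1}(\hat y\neq y)]$ and minimizing pointwise, which produces exactly $\hat y(x)=\operatorname{argmax}_y \P_\omega(y\mid x)/\pi_y$; the two rules agree, establishing consistency. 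The main obstacle I anticipate is the bookkeeping of the three simultaneous reweightings — the margin scale $\delta_y$, the training weight $\alpha_y$, and the test weight $\omega_y^{test}$ — and confirming precisely which combination makes $\delta_y$ cancel. In particular, recasting $\alpha_y$ as a prior tilt and re-expressing the tilted posterior requires care so that all $x$-dependent normalizers are absorbed into $C(x)$ and thus do not disturb the final argmax.
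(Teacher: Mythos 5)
Your argument is a sound re-derivation of Theorem~\ref{thm:fisher}: the collapse of the pairwise loss into a shifted softmax under $\Delta_{yy'}=\log(\delta_{y'}/\delta_y)$, the absorption of $\alpha_y$ into a tilted prior, and the cancellation of $\delta_y$ under $\alpha_y=\delta_y/\pi_y$ are all correct, and your explicit pointwise minimization of $\werr$ to get $\hat y(x)=\arg\max_y \omega_y^{test}\Pro(y\mid x)/\pi_y$ actually fills in a step the paper only asserts. But that is the \emph{theorem}, which the paper has already proved; the corollary's own content is the specialization and the recovery claim, and your proposal never reaches either. Concretely, two steps are missing. First, you never fix $\delta_y=\pi_y/\omega_y^{test}$, which is the choice that makes the outer weight $\alpha_y\omega_y^{test}=1$ disappear and turns the shifted softmax into the concrete loss \eqref{eqn:fisher_loss} with additive logit offsets $\log\pi_y-\log\omega_y^{test}$. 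Second --- and this is the part the corollary is actually about --- you never verify that the \name parameterization produces that loss: with attributes $[\attfreq,\attw]$ and $\Func=[\log(\cdot)]$ the dictionary row for class $k$ is $[\log\pi_k,\ \log\omega_k^{test}]$, so $\lb=\Dic\w_\lb$ with $\w_\lb=[1,-1]$ gives exactly $\lb_k=\log\pi_k-\log\omega_k^{test}$, matching \eqref{eqn:fisher_loss}. Without this bookkeeping the statement ``\name fully recovers the loss'' is not established; the Fisher-consistency half you did prove is inherited from the theorem rather than new.

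The fix is short: append the specialization $\delta_y=\pi_y/\omega_y^{test}$ (yielding $\alpha_y=1/\omega_y^{test}$ and $\Delta_{yy'}=\log\bigl(\pi_{y'}\omega_y^{test}/(\pi_y\omega_{y'}^{test})\bigr)$), write out the resulting logit-adjusted loss, and then perform the one-line matrix--vector computation above to identify it with the \name loss. Your weighted-error Bayes-rule derivation can stay as a useful supplement, but it is not a substitute for these two steps.
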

\begin{proof}
This result can be directly deduced by setting $\delta_{y} = \frac{\pi_{y}}{\omega_y^{test}}$. We have 
$$\alpha_y=1/\omega_y^{test} \quad \text{and} \quad \Delta_{yy'}=\frac{\pi_{y'}\omega_{y}^{test}}{\pi_{y}\omega_{y'}^{test}}$$
Then the corresponding logit-adjusted loss which is Fisher-consistent with weighted pairwise loss is
\begin{align}
\ell(y, f(x))=-\alpha_y\omega_y^{test}\log \frac{\delta_{y} \cdot e^{f_{y}(x)}}{\sum_{y^{\prime} \in[L]} \delta_{y^{\prime}} \cdot e^{f_{y^{\prime}}(x)}}=-\log \frac{e^{f_{y}(x)+\log \pi_{y}-\log \omega_{y}^{test}}}{\sum_{y^{\prime} \in[L]} e^{f_{y^{\prime}}(x)+\log \pi_{y^{\prime}}-\log\omega_{y'}^{test}}}.\label{eqn:fisher_loss}
\end{align}

For aforementioned $\name$ setup, we have $\Dic$=[$\log(\pi)$, $\log(\omega_y^{test})$], so the $\name$ adjusted loss with $\w_\lb=[1,-1]$ is 
\begin{align}
\ell_{\name}(y, f(x))=-\log \frac{e^{f_{y}(x)+\log \pi_{y}-\log \omega_{y}^{test}}}{\sum_{y^{\prime} \in[L]} e^{f_{y^{\prime}}(x)+\log \pi_{y^{\prime}}-\log\omega_{y'}^{test}}}.
\end{align}
Which is exactly the same as \ref{eqn:fisher_loss}.

\end{proof}

\section{Multiple Attributes Benefit Accuracy in GMM}\label{app:gmm}
In this section, we give a simple theoretical justification why multiple attributes acting synergistically can favor accuracy. To illustrate the point, we consider a binary Gaussian mixture model(GMM), where data from the two classes are  generated as follows:
\begin{align}\label{eq:gmm}
y = \begin{cases} +1 &,\text{with prob. }\pi
\\
-1 &,\text{with prob. }1-\pi
\end{cases}
\qquad\text{and}\qquad \mathbf{x}|y\sim \mathcal{N}(y\boldsymbol{\mu},\sigma_y\mathbf{I}_d).
\end{align}
Note here that the two classes can be imbalanced depending on the value of $\pi\in(0,1)$, which models class frequency. Also, the two classes are allowed to have different noise variances $\sigma_{\pm1}$. This is our model for the difficulty attribute: examples generated from the class with highest variance are ``more difficult" to classify as they fall further apart from their mean. Intuitively, a ``good" classifier should account for both attributes. We show here that this is indeed the case for the model above.

Our setting is as follows. Let $n$ IID samples $(\mathbf{x}_i,y_i)$ from the distribution defined in \eqref{eq:gmm}. Without loss of generality, assume class $y=+1$ is minority, i.e. $\pi<1/2$. We train linear classifier $(\mathbf{w},b)$ by solving the following cost-sensitive support-vector-machines (CS-SVM) problem:
\begin{align}
    (\hat{\mathbf{w}}_\delta,\hat{b}_\delta):=\arg\min_{\mathbf{w},b} \|\mathbf{w}\|_2\quad\text{sub. to}~y_i(\mathbf{x}_i^T\mathbf{w}+b)\geq\begin{cases}
    \delta & y_i=+1
    \\
    1 & y_i=-1
    \end{cases}.
\end{align}
Here, $\delta$ is a hyperparameter that when taking values larger than one, it pushes the classifier towards the majority, thus giving larger margin to the minorities. In particular, setting $\delta=1$ recovers the vanilla SVM. The reason why CS-SVM is particularly relevant to our setting is that it relates closely to the VS-loss. Specifically,  \cite{kini2021label} show that in linear overparameterized (aka $d>n$) settings the VS-loss with multiplicative weights $\Delta_\pm1$ leads to same performance as the CS-SVM with $\delta=\Delta_+/\Delta_-.$ Finally, given CS-SVM solution $(\hat{\mathbf{w}}_\delta,\hat{b}_\delta)$, we measure balanced error as follows:
$$
\mathcal{R}_{\text{bal}}(\delta):= \mathbb{P}_{(\mathbf{x},y)\sim\eqref{eq:gmm}}\left\{y(\mathbf{x}^T\hat{\mathbf{w}}_\delta+\hat{b}_\delta)>0\right\}.
$$
We ask: How does the optimal CS-SVM classifier (i.e, the optimal hyperparameter $\delta$) depend on the data attributes, i.e. on the frequency $\pi$ and on the difficulty $\sigma_{+1}/\sigma_{-1}$? To answer this we consider a high-dimensional asymptotic setting in which $n,d\rightarrow\infty$ at a linear rate $d/n=:\bar{d}$. This regime is convenient as previous work has shown that the limiting behavior of the balanced error $\mathcal{R}_{\text{bal}}(\delta)$ can be captured precisely by analytic formulas \cite{deng2019model,montanari2019generalization}. Specifically, \cite{kini2021label} used that analysis to compute formulas for the optimal hyperparameter $\delta$. However, they only discussed how $\delta$ varies with the frequency attributed and only studied scenarios where both classes are equally difficult, i.e. $\sigma_{+1}=\sigma_{-1}$. Our idea is to extend their study to investigate a potential synergistic effect of frequency and difficulty. 

Figure \ref{fig:gmm_app} confirms our intuition: the optimal hyperparameter $\delta_*$ depends both on the frequency and on the difficulty. Specifically, we see in both Figures \ref{fig:gmm_app}(a,b) that the easier the minority class (aka, the smaller ratio $\sigma_{+1}/\sigma_{-1}$), $\delta$ decreases. That is, there is less need to favor much larger margin to the minority. On the other hand, as $\sigma_{+1}/\sigma_{-1}$ increases and minority becomes more difficult, even larger margin is favored for it. Finally, comparing Figures \ref{fig:gmm_app}(a) to Figure \ref{fig:gmm_app}(b), note that $\delta_*$ takes larger values for larger imbalance ratio (i.e., smaller frequency $\pi$), again aggreeing with our intuition.


\begin{figure}
\centering
\begin{subfigure}{2.5in}
    \centering
	\begin{tikzpicture}
		\node at (0,0) [scale=0.3]{\includegraphics{./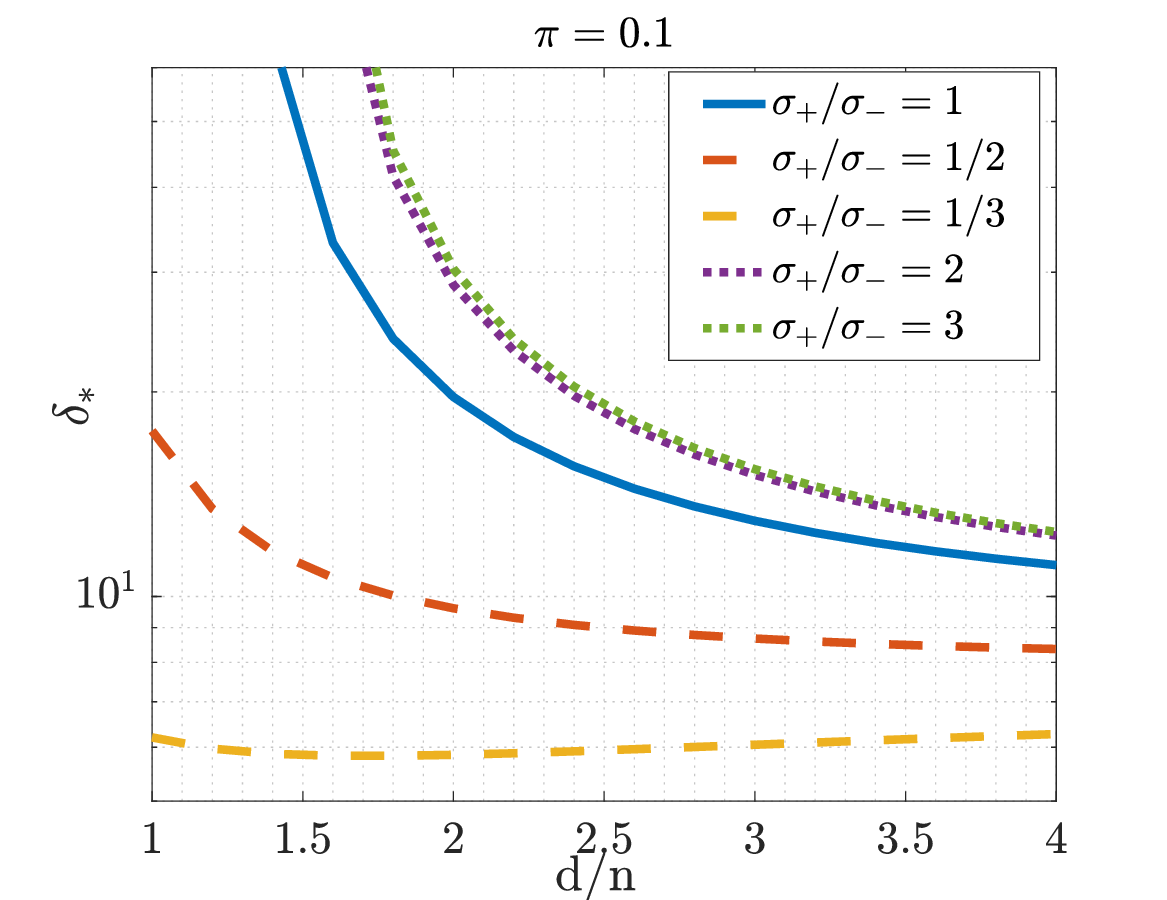}};
	\end{tikzpicture}\vspace{-4pt}
	\caption{}
\end{subfigure}
\begin{subfigure}{2.5in}
    \centering
	\begin{tikzpicture}
		\node at (0,0) [scale=0.3]{\includegraphics{./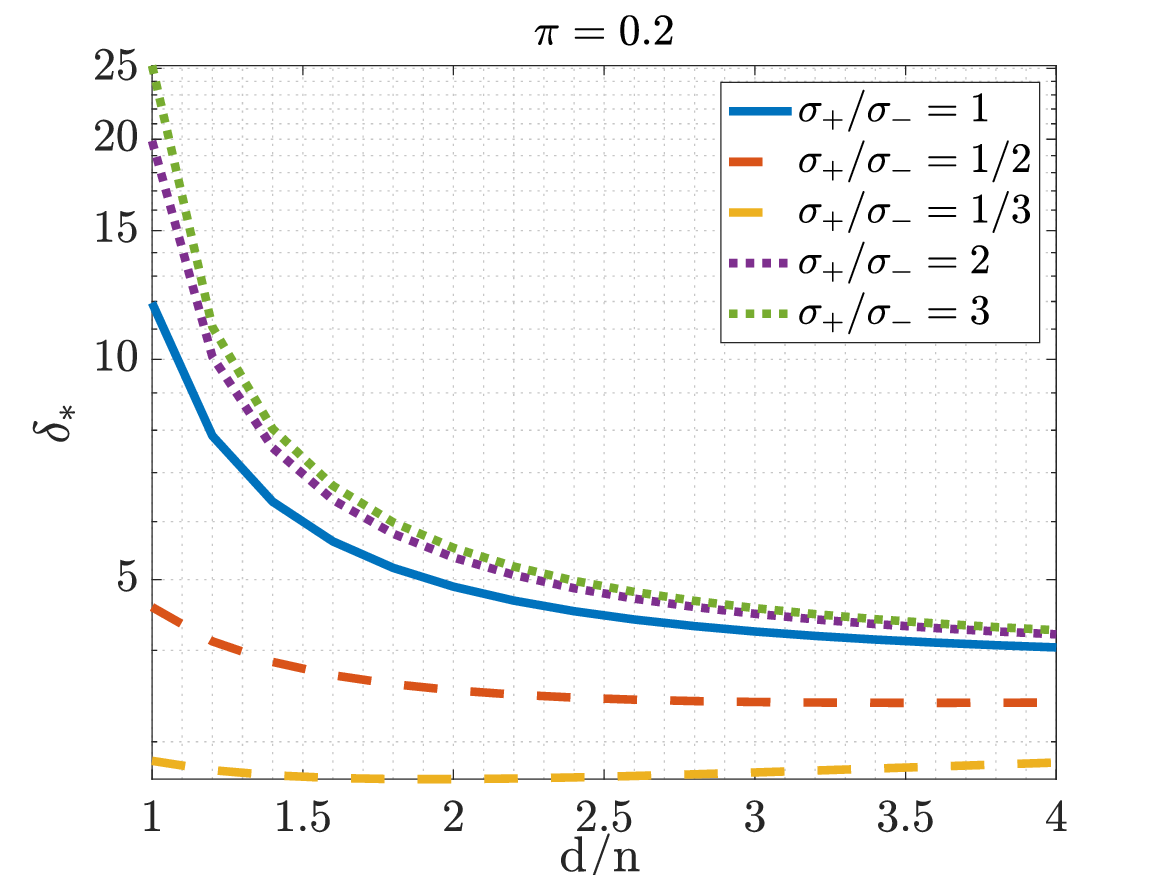}};
	\end{tikzpicture}\vspace{-4pt}
	\caption{}
\end{subfigure}
\caption{The optimal hyperparameter depends on both attributes: frequency ($\pi$) and difficulty ($\sigma_+/\sigma_-$).}
\label{fig:gmm_app}
\end{figure}
\section{Experiment details and reproducibility}\label{sec:detail}

The functions are always fixed regardless of the datasets and objectives change $\bm{\Func}=[\log(\att),\att,\att^{\beta},\att^{2\beta},\att^{4\beta}]$. In our experiments, we set $\beta = 0.075$.

For reproduced result in Table~\ref{table:dic}, we grid search on validation dataset and retrain for fair comparison, so the result is worse than the value reported in \cite{menon2020long,ye2020identifying} which are grid searched on whole test dataset.

For bilevel training, following the training process in  \cite{li2021autobalance}, we start the validation optimization after 120 epochs warmup and training 300 epochs in total. The learning rate decays at epochs 220 and 260 by a factor 0.1. The lower-level optimization use SGD with an initial learning rate 0.1, momentum 0.9, and weight decay $1e-4$, over 300 epochs. At the same time, the upper-level hyper-parameter optimization also uses SGD with an initial learning rate 0.05, momentum 0.9, and weight decay $1e-4$. To get better results, we initialize $\boldsymbol{l}$ using LA loss in experiments in Table~\ref{table:dic}. For a fair comparison, there is no initialization in other experiments. For LA and CDT results in Table~\ref{table:dic}, we do grid search on the imbalanced validation dataset and retrain for a fair comparison.

For most of the experiments, except $\werr$ in Table~\ref{table:attributes_objective}, we plot or report the average result of 3 runs. For $\werr$ where the target weight $\bm{\alpha}$ was generated randomly, we repeat ten times with different random seeds. We report the average result of 10 trails of different $\bm{\omega}^{\text{test}}$ for $\werr$. At each trial, weights $\bm{\omega}^{\text{test}}_k$ are generated i.i.d.~from the uniform distribution over $[0,1]$ and then normalized.

All the experiments are run with 2 GeForce RTX 2080Ti GPUs.

\end{document}